\def\algorithmSize{\footnotesize}

\newif\ifwrapHLExecAlgo
\newif\ifwrapHLLearnAlgo
\newif\ifwrapBilevelExecAlgo

\wrapHLExecAlgotrue
\wrapHLLearnAlgotrue
\wrapBilevelExecAlgotrue

\documentclass{article}
\usepackage[preprint]{neurips_2026}
\usepackage[utf8]{inputenc} % allow utf-8 input
\usepackage[T1]{fontenc}    % use 8-bit T1 fonts
\usepackage{hyperref}       % hyperlinks
\usepackage{url}            % simple URL typesetting
\usepackage{booktabs}       % professional-quality tables
\usepackage{amsfonts}       % blackboard math symbols
\usepackage{nicefrac}       % compact symbols for 1/2, etc.
\usepackage{microtype}      % microtypography
\usepackage[table]{xcolor}  % colors
\usepackage{wrapfig}        % text wrap around figures/tables
\usepackage{enumitem}       % custom list formatting
\usepackage{dillon}         % Dillon's packages and macros
\title{
Learning Bilevel Policies over Symbolic World Models for Long-Horizon Planning
}

\author{
    Dillon Z. Chen$^{1,2,3,4}$ \quad Till Hofmann$^{5}$ \quad Toryn Q. Klassen$^{1,2}$ \quad Sheila A. McIlraith$^{1,2}$
    \\ \\
    $^{1}$Vector Institute \quad
    $^{2}$University of Toronto
    \\
    $^{3}$LAAS-CNRS \quad
    $^{4}$University of Toulouse \quad
    $^{5}$RWTH Aachen University
}

\begin{document}

\maketitle
%%%%%%%%%%%%%%%%%%%%%%%%%%%%%%%%%%%%%%%%%%%%%%%%%%%%%%%%%%%%%%%%%%%%%%%%%%%%%%

\begin{abstract}
We tackle the challenge of building embodied AI agents that can reliably solve long-horizon planning problems.
Imitation learning from demonstrations has shown itself to be effective in training robots to solve a diversity
of complex tasks requiring fine motor control and manipulation over low-level (LL), continuous environments.
Yet, it remains a difficult endeavour to generate long-horizon plans from imitation learning alone.
In contrast, high-level (HL), symbolic abstractions facilitate efficient and interpretable long-horizon planning.
We propose to combine the strengths of LL imitation learning for manipulation and control, and HL symbolic abstractions for long-horizon planning.
We realise this idea via \emph{bilevel policies} of the form $(\pi^{\mathrm{hl}}, \pi^{\mathrm{ll}})$, consisting of a neural policy $\pi^{\mathrm{ll}}$ learned from LL demonstrations, and an HL symbolic policy $\pi^{\mathrm{hl}}$ that is constructed from symbolic abstractions of the LL demonstrations combined with inductive generalisation.
We implement these ideas in the BISON system.
Experiments on extended MetaWorld benchmarks demonstrate that BISON generalises to long horizons and problems with greater numbers of objects than those solved by VLA and end-to-end methods, 
and is more time and memory efficient in training and inference.
Notably, when ignoring LL execution, BISON's HL policies can solve HL problems with 10,000 relevant objects in under a minute.
\begin{center}
    Project page: \url{https://dillonzchen.github.io/bison}
\end{center}
\end{abstract}

\begin{figure}[ht!]
    \centering
    \input{figures/extended-pipeline.tex}
    \caption{
        \textbf{Top Left} -- inputs for learning and executing bilevel policies: a domain theory $\domain$, a labelling function $\labelling$ that maps observations to state abstractions, and LL demos with HL goals.
        \textbf{Bottom Left} -- bilevel policy learning:
        LL demos induce HL demos via $\labelling$, and LL/HL policies are separately learned from LL/HL demos.
        \textbf{Right} -- bilevel policy execution:
        state abstractions $\hls$ are computed from observations $\lls$ via $\labelling$ to propose HL actions $\hla$ which in turn help propose LL actions $\lla$.
    }\label{fig:pipeline}
\end{figure}
\section{Introduction}
Long-horizon planning remains a longstanding challenge to building embodied AI agents.
We present a framework that combines the strengths of symbolic- and neural-based learning and reasoning to produce goal-conditioned policies that address long-horizon planning and acting in continuous state and action spaces with varying numbers of objects.

Scaling up reinforcement learning in simulation and imitation learning from demonstrations has produced robot policies capable of diverse motor control and manipulation tasks \citep{ahn.etal.2022,hoffmann.etal.2022,wei.etal.2022,huang.etal.2022,driess.etal.2023,hu.etal.2023,zitkovich.etal.2023,kim.etal.2024,intelligence.etal.2025,team.deepmind.2025,intelligence.etal.2026,zhang.etal.2026}.
However, scaling alone currently appears insufficient for efficient long-horizon planning and acting \citep{dziri.etal.2023,kambhampati.etal.2024,lin.etal.2025,park.etal.2025}.
In contrast, symbolic planners (e.g., \citep{helmert.2006,correa.etal.2020,scala.etal.2020,seipp.etal.2020,bonassi.etal.2025,speck.etal.2025}) over symbolic world models have proven to be highly effective long-horizon planners, but do not easily extend to settings with varying sensory modalities and hard-to-model low-level dynamics. 
More recently, a suite of bilevel planning systems has emerged that combine symbolic and neural methods (e.g., \citep{srivastava.etal.2014,konidaris.etal.2018,garrett.etal.2021,shen.etal.2026}). 
Unfortunately, they reflect a number of the inherent shortcomings of symbolic planning as they can struggle with open-world planning and partial observability.

In this work, as depicted in \Cref{fig:pipeline}, we present a novel approach to long-horizon planning, combining the benefits of low-level (LL) imitation learning with the affordances of high-level (HL), symbolic abstraction, reasoning, and learning.
Rather than generating plans from a symbolic domain theory, our approach extracts and inductively generalises symbolic policies from abstractions of our LL demonstrations, and learns to realise those symbolic policies at the LL by means of behaviour cloning.

We realise our approach via the creation of \emph{bilevel policies} of the form $(\hlp, \llp)$, consisting of a neural policy $\llp$, learned from multiple LL demonstrations, and an HL symbolic policy $\hlp$ that is constructed from a symbolic abstraction of the LL demonstrations. 
We apply goal regression \citep{waldinger.1977,lozano.etal.1984,reiter.1991,xu.etal.2019}---a pre-image rewriting technique---to the abstracted symbolic demonstrations to extract a set of \emph{Condition} $\mapsto$ \emph{Action} rules. 
We inductively generalise the resulting rules to produce compact and expressive HL policies consisting of first-order, condition-action rules that can operate in open-world environments \citep{reiter.2001,sanner.boutilier.2009,liu.etal.2025}.
Our LL policies $\llp$ are represented by a graph neural network \citep{scarselli.etal.2009,kipf.welling.2017,gilmer.etal.2017} conditioned on actions returned by $\hlp$.

We implement our approach in the \genplan{} system.
We compare \genplan{} to 8 baselines spanning VLA, end-to-end, and symbolic planning methods on 8 environments extending the MetaWorld benchmarks~\citep{yu.etal.2019,mclean.etal.2025}.
Experiments show that \genplan{} solves tasks with longer horizons and more objects than those solved by end-to-end and VLA baselines, and handles more complex environment dynamics than symbolic planning baselines.
Our contributions include:

\begin{itemize}
    \item We introduce \genplan{}, an approach that learns \emph{bilevel policies} $(\hlp, \llp)$ from LL demonstrations paired with HL goals, given a domain theory $\domain$ and labelling function $\labelling$.
    \item We apply goal regression and inductive generalisation to learn compact, expressive HL policies $\hlp$ that can generalise to arbitrary numbers of objects, and can operate in open-world and partially observable environments, and we learn LL policies via a compact graph neural network $\llp$ with fewer than 33,000 parameters.
    \item We conduct 21,600 episodes' worth of experiments across various baselines and environments extending the MetaWorld benchmark. 
    Results demonstrate that \genplan{} achieves superior efficiency, generalisation, and long-horizon planning capabilities compared to baselines.
    When ignoring LL execution, \genplan{}'s HL policies can solve problems with at least 10,000 objects in under a minute.
\end{itemize}

% !TEX root =mainfile.tex

\section{Related Work}\label{sec:related}

\paragraph{Planning with Abstractions and Hierarchical Reinforcement Learning} 
The idea of leveraging abstractions to facilitate efficient long-horizon planning is a longstanding one \citep{sacerdoti.1974,tate.1977,giunchiglia.walsh.1992,bacchus.yang.1994,boutilier.dearden.1994,erol.etal.1996,dean.givan.1997,nau.etal.2003,marthi.etal.2008,konidaris.2019,bercher.etal.2019}.
Reinforcement learning (RL) approaches have also looked at learning and leveraging hierarchies and abstractions \citep{dayan.hinton.1992,hauskrecht.etal.1998,sutton.etal.1999,dietterich.2000,li.etal.2006,konidaris.etal.2018,abel.etal.2016,abel.etal.2018,le.etal.2018} to improve sample efficiency.
There has also been recent interest in leveraging formal languages to represent such abstractions as well as to specify goals for RL agents
\citep{li.etal.2017,icarte.etal.2018,camacho.etal.2019,bozkurt.etal.2020,illanes.etal.2020,vaezipoor.etal.2021,icarte.etal.2022,voloshin.etal.2022,qiu.etal.2023,yalcinkaya.etal.2024,jackermeier.abate.2025,li.etal.2025a}.
Our work is inspired by the use of HL abstractions but differs in the problem setting:
we learn policies entirely from demonstrations over both the abstraction and underlying environment that generalise to longer horizon problems than RL and exploration methods.

\paragraph{Task and Motion Planning}
Task and motion planning (TAMP) is a common framework for solving embodied AI that integrates HL planning and LL execution.
TAMP methods can be categorised \citep{garrett.etal.2021,zhao.etal.2025} into interleaved search-then-sample \citep{srivastava.etal.2014,shah.etal.2020,mendezmendez.etal.2023,shen.etal.2025} or hybrid constraint satisfaction and optimisation \citep{lozanoperez.kaelbling.2014,toussaint.2015,dantam.etal.2018} approaches.
Later works employ learning in various manners: predicting HL action feasibility \citep{wells.etal.2019,driess.etal.2020,xu.etal.2022,bouhsain.etal.2023,yang.etal.2023,bouhsain.etal.2025}, learning problem transformations, heuristics or policies for guiding search \citep{chitnis.etal.2016,kim.etal.2019,silver.etal.2021a,curtis.etal.2022,khodeir.etal.2023,mandlekar.etal.2023,mendezmendez.etal.2023,cieslar.etal.2024,du.etal.2026}, and learning policies that aim to imitate bilevel planners \citep{driess.etal.2020a,mcdonald.hadfieldmenell.2021,zhu.etal.2021,lin.etal.2022}.
Our work falls in the latter category whereby we learn policies that solve bilevel planning problems without relying on search.
Our work differentiates itself within this category through learning bilevel policies over both the LL and HL spaces for facilitating efficient, interpretable and long-horizon planning in contrast to purely end-to-end methods.
Furthermore, some TAMP search methods struggle with open-world planning and partial observability, owing to their reliance on symbolic planners, while our approach can support these problem settings.

% !TEX root =mainfile.tex

\section{Problem Statement}\label{sec:background}
We are interested in learning policies over continuous environments that address HL goals from (1) LL demonstrations paired with HL goals, (2) an abstraction language $\domain$, and (3) a labelling function $\labelling$ that maps LL observations to HL abstractions, as depicted in \Cref{fig:pipeline},
that can generalise to unseen states and goals in a zero-shot manner.
We follow a basic \emph{bilevel planning} paradigm~(e.g., \citep{srivastava.etal.2014,garrett.etal.2021}) in which a planning problem, consisting of continuous LL states and actions, admits an HL model-based abstraction using symbolic and relational states and actions.
We then list assumptions applied in our planning representations, and corresponding related work that validates such assumptions.
A summary of definitions, notations and typography semantics is provided in Appendix \ref{app:notation}.

\paragraph{Bilevel Planning} A \emph{planning problem} is a tuple $\problem = \problemDefn$ where $\states$ is a set of states, $\actions$ is a set of actions, $\init$ is the initial state and $\goal \subseteq \states$ is a set of goal states.
An action $\action \in \actions$ is a function $\action : \states \to \Delta(\states \cup \set{\bot})$ that maps to a distribution of possible next states or $\bot$ indicating the action is not applicable.
A policy is a distribution over actions $\pi(\action \mid \state, \ast)$ conditioned on a state and other arguments $\ast$.
A policy is a \emph{solution} for a planning problem if it achieves a goal state from the initial state $\init$ with an expected probability of 1.
A \emph{bilevel planning problem} is a tuple $\bilevelProb=\bilevelDefn$ consisting of an HL and LL planning problem and a \emph{labelling function} $\labelling: \states^{\ll} \to \states^{\hl}$ such that $\hlg = \seta{\labelling(\lls) \mid \lls \in \llg}$.
We next depict conditions on $\labelling$ for ensuring useful HL abstractions of LL problems, followed by representations of HL and LL planning problems for our work.

\paragraph{Downward Refinement Property}
We next introduce a property of our bilevel planning paradigm that establishes when the HL abstraction is useful for constructing bilevel planning solutions.
The downward refinement property (DRP) \citep{bacchus.yang.1994} states that any solution for an HL problem can be refined into a solution for the LL problem.
We extend this notion to the general nondeterministic setting.
Given a state $\state$ and policy $\pi$, we denote $\exec(\state, \pi)$ to be the set of all possible successor states achievable by applying some action with non-zero support in $\pi(\cdot \mid \state)$. 
\begin{definition}[Nondeterministic Downward Refinement Property]\label{def:ndrp}
    A bilevel planning problem $\bilevelDefn$ exhibits \emph{the nondeterministic downward property (NDRP)} if for every $\hlp$ that solves $\hlprob$, there exists $\llp$ that solves $\llprob$ where for every $\lls$ reachable by $\llp$ in $\llprob$,
    \begin{align}
        \forall \state \in \exec(\lls, \llp), \labelling(\state) \in \exec(\labelling(\lls), \hlp) \cup \seta{\labelling(\lls)}.
    \end{align}%
\end{definition}%
Intuitively, the LL policy can be viewed as analogous to a set of skills or options \citep{konidaris.barto.2009} for implementing and refining HL actions. 
NDRP states that every step determined by $\llp$ either stays within the same HL abstract state or advances to a valid HL successor under $\hlp$.
In this way, a solution for the HL problem can be used as guidance to build a solution for the underlying LL problem.

\paragraph{LL Representation: Object-Centric, Ego-Centric Viewpoints}
We represent states in our LL problems $\llprob$ in an object- and ego-centric manner.
This means that states are represented as tuples $\lls = \gena{\x_e, \set{\x_o}_{o \in \objects}}$ where $\x_e \in \R^n$ is a vector representing the agent's state (e.g., location of a robot), $\objects$ is a set of objects visible to the agent, and $\set{\x_o}_{o \in \objects}$ where $\x_o \in \R^m$ is a set of vectors representing the states (e.g. position) of all other objects in the world relative to the agent.
LL actions are black box functions that determine transitions between LL states with unknown probabilities.
The object-centric representation is commonly used for embodied AI planning~\citep{silver.etal.2022,silver.etal.2023,kumar.etal.2024,liang.etal.2025,shen.etal.2026} for which there is much work for learning or extracting such representations from raw sensory data~\citep{burgess.etal.2019,anand.etal.2019,kipf.etal.2020,zadaianchuk.etal.2021,james.etal.2022,yuan.etal.2023,haramati.etal.2024,wen.etal.2025,park.etal.2026}.

\paragraph{HL Representation: Relational State and Action Abstractions}
We represent our HL problems $\hlprob$ via state and temporal abstractions of the world in a relational form, inspired by the STRIPS~\citep{fikes.nilsson.1971} and PDDL languages~\citep{mcdermott.etal.1998,ghallab.etal.2004,geffner.bonet.2013,haslum.etal.2019} from symbolic planning.
Specifically, LL states can be abstracted into HL states represented as symbolic databases and LL actions can be temporally abstracted into HL actions via an action schema language.
We define a planning \emph{domain} as a tuple $\domain = \langle \predicates, \schemata \rangle$ consisting of a set of predicates $\predicates$ and action schemata $\schemata$.

A \emph{predicate} is a symbol $p$ with a set of argument terms, denoted $p(x_1, \ldots, x_{n_p})$ where $n_p \in \N_0$ depends on $p$ and denotes the arity of $p$.
A fact is a predicate whose argument terms are all instantiated with objects denoted $p(o_1, \ldots, o_{n_p})$, and an HL state is a set of facts.
We concern ourselves with \emph{open-world planning} where facts that are not present in the state are assumed unknown \citep{green.1969}, in contrast to the more restrictive closed-world planning exhibited by PDDL where facts not present in the state are assumed false \citep{fikes.nilsson.1971}.
Our labelling function $\labelling$ maps from LL to HL states with predicates from $\domain$.
We assume that all LL goal states can be abstracted into a single HL \emph{goal condition} $\hlg$ defined as a set of facts.
An HL state $\hls$ is a goal state if $\hlg \subseteq \hls$.

An \emph{action schema} is a temporally-abstracted, nondeterministic action that can be parameterised by objects, given by a tuple $\gena{ \var(a), \pre(a), \seta{\add_i(a), \del_i(a)}_{i=1}^{n} }$ where $\var(a)$ is a set of parameter variables, and the preconditions $\pre(a)$ and nondeterministic add $\add_i(a)$ and delete $\del_i(a)$ effects are sets of predicates with arguments instantiated with variables from $\var(a)$.
An HL action $\hla$ is an action schema $a$ where each variable is instantiated with an object, denoted $\hla = a(o_1, \ldots, o_{n_a})$, and is applicable in an HL state $\hls$ if $\pre(\hla) \subseteq \hls$ in which case we define its successor states 
\begin{align}
    \succ(\hls, \hla) = \seta{(\hls \setminus \del_i(\hla)) \cup \add_i(\hla)}_{i=1}^n. \label{eq:succ}
\end{align}

\begin{example}[Pick and Place Domain]\label{ex:pick-place}
  We can model a simple pick and place domain $\domain = \gen{\predicates, \schemata}$ where a robot can transport items between locations with
  $\predicates = \{ \mi{rAt}(\mi{x}), \mi{at}(\mi{x}, \mi{y}), \allowbreak \mi{free}(), \allowbreak \mi{hold}(\mi{x}) \}$ and $\schemata$ containing the three deterministic action schemata:
  \begin{center}
    \scriptsize
    \begin{tabularx}{\textwidth}{l l l}
      $\begin{aligned}
        \mi{pick(o, l)}:
        \pre &: \{ \mi{rAt}(\mi{l}), \mi{at}(\mi{o}, \mi{l}), \mi{free}()\} \\
        \add &: \{ \mi{hold}(\mi{o})\} \\
        \del &: \{ \mi{at}(\mi{o}, \mi{l}), \mi{free}()\}
      \end{aligned}$
      &
      $\begin{aligned}
        \mi{move(l1, l2)}:
        \pre &: \{ \mi{rAt}(\mi{l1})\} \\
        \add &: \{ \mi{rAt}(\mi{l2})\} \\
        \del &: \{ \mi{rAt}(\mi{l1})\}
      \end{aligned}$
      &
      $\begin{aligned}
        \mi{place(o, l)}: 
        \pre &: \{ \mi{rAt}(\mi{l}), \mi{hold}(\mi{o})\} \\
        \add &: \{ \mi{at}(\mi{o}, \mi{l}), \mi{free}()\} \\
        \del &: \{ \mi{hold}(\mi{o})\}
      \end{aligned}$
    \end{tabularx}
  \end{center}
\end{example}

\paragraph{Where do $\domain$ and $\labelling$ come from?} 
There is significant work in generating HL state and temporal abstractions in this form from raw sensory data (e.g., \citep{konidaris.etal.2014,konidaris.etal.2018,asai.fukunaga.2018,asai.etal.2022,chitnis.etal.2022,silver.etal.2021,james.etal.2022,silver.etal.2022,silver.etal.2023,shah.etal.2024,xi.etal.2024,huang.etal.2025,li.etal.2025}), whose progress has been propelled \citep{liu.etal.2023,han.etal.2024,athalye.etal.2026,ahmed.etal.2025,liang.etal.2025,li.etal.2025,yang.etal.2025,shen.etal.2026,liang.etal.2026} by vision language model advances \citep{radford.etal.2021,team.etal.2023}.
The result is an HL domain description $\domain$ along with a labelling function $\labelling$ that maps LL states to HL states.
Specifically, $\domain$ may be manually generated by a human and that $\domain$ and $\labelling$ can be generated via a VLA or learned.

\paragraph{Problem Statement Summary}
We can now define the problem:
given an HL domain $\domain$ that describes the abstract actions available to the agent, a labelling function $\labelling$ that maps LL states to HL states, and a finite dataset $\dataset = \datasetDefn$ consisting of LL state-action demonstrations along with HL goals, our objective is to learn a goal-conditioned policy $\policyDefn$ conditioned on LL states and HL goals that chooses low-level actions to achieve the goal.

% !TEX root =mainfile.tex

\section{Learning Bilevel Policies for Bilevel Planning}\label{sec:method}
We learn bilevel policies from the given HL-goal labelled LL demonstrations $\dataset$, using the HL domain $\domain$ and labelling function $\labelling$.
A \emph{bilevel policy} (BP) consists of a pair of policies $(\hlp, \llp)$ of the forms $\hlpDefn$ and $\llpDefn$, respectively.
Both policies are goal-conditioned and can be reused across different problems.
Together with the labelling function $\labelling$, the pair of policies can be composed into a single policy operating on the LL environment conditioned on HL goals:
\begin{align}
    \textstyle\pi(\lla \mid \lls, \hlg) = \sum_{\hla} \llp(\lla \mid \lls, \hla, \hlg) \cdot \hlp(\hla \mid \labelling(\lls), \hlg).
    \label{eq:bilevel}
\end{align}%
In the BP framework, the HL and LL policies are learned separately, with the main idea being that HL policies $\hlp$ capture HL decision making, while LL policies $\llp$ provide LL realisations of the HL actions prescribed by $\hlp$.
While $\llp$ is learned directly from $\dataset$ via imitation learning, the HL policy is learned by first constructing an abstract dataset $\hldataset$ by applying $\labelling$ to $\dataset$, and then inductively learning a symbolic policy $\hlp$ from $\hldataset$.
We realise the HL policy $\hlp$ as a set of first-order, condition-action rules (\Cref{ssec:hl-policy}) and the LL policy $\llp$ as a graph neural network (\Cref{ssec:ll-policy}).

\begin{figure}
\centering
    \input{figures/hllearning.tex}
    \caption{The HL policy $\hlpDefn$ learning process. 
    \textbf{Step 1}: we use the labelling function $\labelling$ to construct HL traces from the LL demonstrations paired with HL goals. 
    \textbf{Step 2}: we utilise goal regression to extract condition-action rules from the HL traces and goals (underlined).
    \textbf{Step 3}: we inductively generalise the rules by replacing objects with variables to produce symbolic policies.
    }
    \label{fig:hl-policy}
\end{figure}

\subsection{Learning HL Policies via Goal Regression and Inductive Generalisation}\label{ssec:hl-policy}
We construct HL policies $\hlpDefn$ from LL demonstrations paired with HL goals in 3 steps illustrated in \Cref{fig:hl-policy}:
(1) we first generate HL traces that are abstractions of our LL demonstrations;
(2) we then utilise goal regression to extract a set of condition-action rules from such HL traces; and
(3) we finally utilise inductive generalisation to produce interpretable symbolic policies in the form of first-order, condition-action rules. 
Steps 2 and 3 are achieved in milliseconds.
In what follows, we first define the form of our HL policies and a means of grounding them for execution with respect to a particular problem instance.
We then elaborate on the 3 steps required to generate HL policies.

\paragraph{HL policy representation}
Our HL policies $\hlp$ consist of sets of first-order, condition-action rules $r$ with associated priority values related to goal proximity.
These first-order, condition-action rules are tuples of the form $\langle\val(r),\allowbreak \var(r),\allowbreak \sCond(r),\allowbreak \gCond(r),\allowbreak \aHead(r)\rangle$ where
$\val(r)$ denotes the rule priority,
$\var(r)$ denotes a set of variables,
$\sCond(r)$ and $\gCond(r)$ are first-order conjunctive formulas over predicates in $\predicates$ representing the state and goal conditions, and 
$\aHead(r)$ is an action schema denoting the action to take if the conditions hold.
\Cref{ex:pick-place-policy} illustrates such an HL policy.

\begin{example}\label{ex:pick-place-policy}
    A first-order, condition-action policy for the pick and place domain in \Cref{ex:pick-place}, where goal conditions $\gCond$ are indicated by underlining:
    \small
    \scriptsize\begin{align*}
    \overbrace{\!1\!:\!}^{\val(r)} \overbrace{\exists x, l}^{\var(r)}.\! \; \overbrace{\mi{hold}(x) \ruleAnd \mi{rAt}(l)}^{\sCond(r)}
    \ruleAnd \overbrace{\underline{\mi{at}(x, l)}}^{\gCond(r)}
    &\!\mapsto\! \overbrace{\mi{place}(x, l)}^{\aHead(r)}
    & 3\!:\!\exists x, l, l_1.\! \; \mi{at}(x, l_1) \ruleAnd \mi{free}() \ruleAnd \mi{rAt}(l_1)
    \ruleAnd \underline{\mi{at}(x, l)}
    &\!\mapsto\! \mi{pick}(x, l_1) \\
    2\!:\!\exists x, l, l_1.\! \; \mi{hold}(x) \ruleAnd \mi{rAt}(l_1)
    \ruleAnd \underline{\mi{at}(x, l)}
    &\!\mapsto\! \mi{move}(l_1, l)
    &4\!:\!\exists x, l, l_1, l_2.\! \; \mi{at}(x, l_1) \ruleAnd \mi{free}() \ruleAnd \mi{rAt}(l_2)
    \ruleAnd \underline{\mi{at}(x, l)}
    &\!\mapsto\! \mi{move}(l_2, l_1)
\end{align*}%
\end{example}%

\newcommand*{\groundrules}[1][\hlp]{\ensuremath{\ground(#1)}\xspace}
In order to select the next action according to an HL policy \hlp, variables within the rules must be replaced with appropriate objects.
Given a rule $r$ of $\hlp$, a \emph{grounding} of $r$ is an assignment of the variables $\var(r)$ in $\sCond(r)$, $\gCond(r)$, and $\aHead(r)$ with objects from $\objects$.
A ground rule $r \in \groundrules$ is \emph{applicable} in an HL state $\hls$ with goal $\hlg$ if $\sCond(r) \subseteq \hls$ (i.e., the state condition is satisfied by the state) and $\gCond(r) \subseteq \hlg \setminus \hls$ (i.e., the goal condition is satisfied by the set of unachieved goal facts).
A policy $\hlp$ is executed by finding, via database query algorithms, a ground rule $r$ with lowest $\val(r)$ value and ties broken arbitrarily that is applicable in $\hls$ and $\hlg$.
Implicitly, this defines a probability distribution $\hlp(\hla \mid \hls, \hlg)$ over the grounded HL actions,
where $\hlp(\hla \mid \hls, \hlg) = 1$ if $\hla$ is selected by $\hlp$ in state $\hls$ given goal $\hlg$, and $0$ otherwise.

\paragraph{Step 1: construct HL traces}
From an LL trace $\gena{\hlg_i, \lls_0, \lla_0, \dotter, \lls_{m}, \lla_{m}} \in \dataset$, the domain $\domain$, and the labelling function $\labelling$, we can extract an HL trace $\gena{\hlg_i, \hla_0, \dotter, \hla_{n}}$ as follows: starting with the HL abstraction $\hls_0 = \labelling(\lls_0)$ of the initial LL state $\lls_0$, we iteratively determine the LL state where the HL abstraction changes, i.e., where $\labelling(\lls_{j}) \neq \labelling(\lls_{j-1})$, which defines the next HL state $\hls_i = \labelling(\lls_j)$.
We then find an applicable HL action $\hla_i$ which causes the change from $\hls_{i-1}$ to $\hls_{i}$, i.e., an $\hla_i$ such that $\pre(\hla_i) \subseteq \hls_{i-1}$ and for some effect $j$, we have $\hls_i = (\hls_{i-1} \cup \add_j(\hla_i)) \setminus \del_j(\hla_i)$.
By applying this extraction for each LL trace from $\dataset$, we obtain a HL dataset $\hldataset = \hldatasetDefn$.

\paragraph{Step 2: extract rules}
Our method converts HL action traces into ground condition-action rules via \emph{goal regression}~\citep{fikes.etal.1972,waldinger.1977,lozano.etal.1984,reiter.1991,reiter.2001,fritz.mcilraith.2007}.
Goal regression computes the minimal condition under which an action achieves a goal;
when applied recursively, it determines under which condition an action trace will lead to a goal state.
Formally, a set of facts $\hlg$ is regressable over an HL action $\hla$ if for every nondeterministic outcome $i$ of $\hla$ we have $\del_i(\hla) \cap \hlg = \emptyset$, in which case we define
\begin{align}
    \regr(\hlg, \hla) = \seta{(\hlg \setminus \add_i(\hla)) \cup \pre(\hla) \mid i = 1, \ldots, n},
\end{align}
and $\regr(\hlg, \hla) = \emptyset$ otherwise.
Notably, goal regression is also a powerful tool for enabling open-world and generalised planning \citep{gretton.thiebaux.2004,sanner.boutilier.2009,liu.etal.2025}.

\paragraph{Step 3: inductive generalisation}
Finally, we \emph{lift} the (ground) condition-action rules to obtain first-order rules by replacing ground objects with fresh variables.
The resulting rules are abstracted from specific problem instances and thus generalise to arbitrary numbers of objects.
Given an HL action $\hla$ and sets of facts $\hls$ and $\hlg$, let $o_1, \ldots, o_q$ be the union of all objects from the action and facts.
We then define the set of variables $\var = \set{v_1, \ldots, v_q}$ and mapping $g: \objects \to \var$ defined by $g(o_i) = v_i$.
Then we define the lifting operator as 
\begin{align}
    \lift(\hla, \hls, \hlg) = \gena{\var, \sCond, \gCond, \aHead}
    \label{eq:lifting}
\end{align}
where $\sCond$ and $\gCond$ are the conjunctions of the facts in $\hls$ and $\hlg$ respectively with $g$ applied to their objects, and $\aHead$ is the action schema of $\hla$ with $g$ applied on its objects.

    \ifwrapHLLearnAlgo
    \setlength{\intextsep}{0pt}
    \begin{wrapfigure}{r}{0.4\textwidth}
        \small
        \hfill
        \begin{minipage}{0.38\textwidth}
        \input{algos/learn_hl.tex}
        \end{minipage}
    \end{wrapfigure}
\else
    % !TEX root =mainfile.tex

\ifwrapHLLearnAlgo
\begin{algorithm}[H]
\else
\begin{algorithm}[ht]
\fi
\algorithmSize
  \DontPrintSemicolon
  \LinesNumbered
  \RestyleAlgo{ruled}
    \caption{HL policy learning}\label{alg:hl-policy-learning}
    \KwInput{LL demonstrations with HL goals $\dataset$.}
    \KwOutput{HL policy $\hlp$.}
    {
        \tcp{\color{blue}Step 1: Construct HL Traces}
        $\hldataset \gets \constructFrom(\dataset)$\; \label{line:construct-hl-traces}
        $\hlp \gets \emptyset$\; \label{line:hlp-init}
        \For{
            $\gena{\hlg, \hla_0, \ldots, \hla_{m}} \in \dataset^{\hl}$
        }{ \label{line:hlp-loop}
            $S \gets \seta{\hlg}$\; \label{line:prob-init}

            \For{$j = m, \ldots, 0$}{ \label{line:plan-loop}
                $S^{\textit{next}} \gets \emptyset$\; \label{line:next-init}
                \For{$\hls \in S$}{ \label{line:loop-hls}
                    \tcp{\color{blue}Step 2: Extract Rules}
                    $S' \gets \regr(\hls, \hla_j)$\; \label{line:regr}
                    \For{${\hls}' \in S'$}{
                        ${\hlg}' \gets \hlg \cap \succ({\hls}', \gena{{\hla_j}', \ldots, {\hla_{m}}'})$\\ \label{line:succ}
                        \tcp{\color{blue}Step 3: Inductive Generalisation}
                        $r \gets \lift(\hla_j, {\hls}', \hlg)$\; \label{line:lift}
                        $\hlp \gets \hlp \cup \seta{\gena{r, m-j}}$\; \label{line:hlp-update}
                    }
                    $S^{\textit{next}} \gets S^{\textit{next}} \cup S'$\; \label{line:next-update}
                }
                $S \gets S^{\textit{next}}$\; \label{line:next-assign}
            }
        }
        \Return $\hlp$\; \label{line:hlp-return}
    }
\end{algorithm}%
\fi

\paragraph{Putting the steps together}
\Cref{alg:hl-policy-learning} describes the procedure of learning an HL policy $\hlp$ from LL demonstrations $\dataset$ via goal regression.
It begins by initialising an empty policy (Line 2) and then iterates over the HL trajectories in $\hldataset$ (Line 3) constructed from the LL demonstrations (Line 1).
For each trajectory, it initialises $S$ to be the singleton set containing the goal to be regressed (Line 4).
Then it iterates over the HL actions in reverse order (Line 5), and at each step regresses the current goals in $S$ (Line 7) over the current HL action $\hla_j$ to obtain the next set of subgoals $S'$ (Lines 6, 8, 13 and 14).
The actions and regressed goals in each step $j$ are lifted into first-order rules $r$ (Line 11) alongside ${\hlg}'$, the subset of $\hlg$ that is achieved by the suffix of determinised actions from the trajectory (Line 10).
The rules are added to the policy $\hlp$ with priority $m-j$ (related to goal proximity) (Line 12) and the resulting policy $\hlp$ is returned in Line 15.

\paragraph{Generalisation to arbitrarily many objects}
The following theorem establishes the conditions under which \Cref{alg:hl-policy-learning} returns policies that can generalise to problems with arbitrarily many objects and hence arbitrarily long horizons.
The main conditions are that the problems (a) exhibit goals whose components can be achieved independently in any order, as is the case for a large proportion of real-world problems \citep{simon.1956,korf.1987}, and (b) the goal components can be achieved with bounded size HL policies. These conditions are formally defined in Appendix \ref{app:theory} alongside the proof of the theorem.
Under these conditions, the proof follows by representing an infinite set $S$ of arbitrarily large HL problems as a finite collection of subproblems equivalent up to object renaming.
Each subproblem class exhibits a finite policy and we can compose all such policies into a single, finite policy that can solve every problem in $S$.

\begin{theorem}\label{thm:formal}
    Let $\domain = \gena{\predicates, \schemata}$ be an HL domain, $\labelling$ a labelling function, and $C \in \N$.
    There exists a finite dataset $\dataset$ such that the HL policy learned from $\dataset$ via \Cref{alg:hl-policy-learning} solves any HL planning problem $\hlprob$ conforming to $\domain$ and satisfying $C$-bounded goal independence.
\end{theorem}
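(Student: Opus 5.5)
We build the finite dataset $\dataset$ explicitly from the two defining features of $C$-bounded goal independence. First, every goal $\hlg$ of a conforming problem splits into components, each mentioning at most $C$ objects. Second, these components can be achieved independently and in any order, each by an HL policy whose relevant objects and facts are bounded by $C$.

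\textbf{Step 1 (finitely many component classes).} Fix a goal component together with the HL facts relevant to achieving it, restricted to at most $C$ objects. Up to renaming of objects, $\predicates$ and $\schemata$ yield only finitely many such restricted subproblems, since there are finitely many ground facts over $C$ objects. Let $P_1,\ldots,P_K$ be one representative from each isomorphism class. By the bounded-policy assumption, each $P_k$ has a solution, and every state reachable by that solution is witnessed by a finite set of HL traces; nondeterminism contributes only finitely many outcome branches.

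\textbf{Step 2 (the dataset).} Pick LL demonstrations whose abstractions under $\labelling$ realise these HL traces. Such demonstrations exist because we may choose $\dataset$ freely, and we invoke NDRP (\Cref{def:ndrp}) to realise each HL trace at the LL. Step 1 of \Cref{alg:hl-policy-learning} then recovers the HL traces. Steps 2--3 regress the goal component along each trace and lift the result. This gives a lifted rule for every reachable state of every $P_k$, with the condition equal to the regressed subgoal and the priority $m-j$ equal to the remaining distance to the component goal.

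\textbf{Step 3 (correctness on arbitrary problems).} Take any conforming problem and any reachable HL state $\hls$ that is not a goal. Let $\hlg_c$ be an unachieved goal component. By independence, projecting $\hls$ onto the objects of $\hlg_c$ gives a state isomorphic to a reachable state of some $P_k$. Hence some ground instance of a learned rule is applicable, because its state condition is contained in $\hls$ and its goal condition is contained in $\hlg\setminus\hls$. So the policy is always defined. To show progress, use the properties of regression. The condition of the applied rule is a regressed subgoal, so every outcome of the chosen action satisfies a regressed subgoal with strictly smaller priority for the same component. By independence, the action also deletes no goal fact that has already been achieved.

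Define the potential $\Phi(\hls)$ as the lexicographic pair consisting of the number of unachieved components and the minimal priority of an applicable rule. Every outcome strictly decreases $\Phi$. Since $\Phi$ is well-founded, every execution reaches $\hlg$, which gives success with probability $1$.

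\textbf{Main obstacle.} The delicate point is the tie-breaking in Step 3. A rule of lower priority that is applicable for a \emph{different} component may fire instead, or a rule whose grounding reuses objects across components may fire. Either could cause oscillation between components. I would first try to show that any applicable ground rule decreases the component-local potential of its own component without increasing that of any other component. This should follow from the independence clause stating that achieving one component never undoes or obstructs another. If that holds, the multiset of local potentials decreases under the multiset ordering, which closes the argument.
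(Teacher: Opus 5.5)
Your skeleton matches the paper's: finitely many renaming classes of bounded singleton-goal subproblems, one representative demonstration per class, transfer of lifted rules via \Cref{prop:equiv}, and goal independence to combine components. The gap is in Step~3, where you replace the paper's appeal to \Cref{def:gi} with a potential argument that does not hold. Your claim that every outcome of the chosen action satisfies a regressed subgoal of strictly smaller priority fails for nondeterministic actions. $\regr(\hls,\hla)$ contains one set $(\hls\setminus\add_i(\hla))\cup\pre(\hla)$ per outcome $i$. A rule built from the regression through outcome $i$ says nothing about whether $\hls$ holds after an outcome $i'\neq i$, since the facts in $\hls\cap\add_i(\hla)$ need not be added. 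Your argument in fact proves too much: it would show that every execution of the learned policy reaches the goal. But consider a problem in which an action must be retried until a lucky outcome occurs, such as rolling until the gacha yields a needed colour. It has infinite goal-avoiding executions under every policy and admits only probability-1 success. Termination therefore has to be argued in strong-cyclic style: the policy is defined on every reachable state, and the goal remains reachable under it from each of them. Alternatively, as in the paper, it can be inherited from the solution policies that \Cref{def:gi} composes.

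The step you flag as the main obstacle is left open, and your diagnosis of it is off. For deterministic actions, tie-breaking across components is harmless for your $\Phi$. The fired rule has the globally minimal value, and afterwards either its component is achieved or its own successor rule, with value one less, is applicable, however objects are shared. Two things can actually break the argument:
\begin{itemize}
\item A lifted rule grounded in a larger problem may delete an already achieved goal fact of another component, because regression protects only the rule's own subgoal.
\item Interleaving may produce states whose restriction is not among those your representatives cover.
\end{itemize}
Moreover, not every restricted subproblem $P_k$ is solvable, since $\text{GI}_C$ only supplies solutions for step-(a) instances. The paper does not derive composition from a local potential. It uses \Cref{prop:equiv} to obtain an applicable rule for each unachieved singleton goal at each reached state. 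It then concludes by the definition of GI, which asserts that composing non-deleting singleton-goal solution policies in any order yields a solution.

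Two smaller points. In \Cref{def:gi} the constant $C$ bounds the preimage of each sub-policy, not the number of objects. Your finiteness count should therefore go through trajectory length and arities, as in the paper: at most $kN+M$ objects for a length-$k$ sequence, with $N$ and $M$ the maximal schema and predicate arities. Also, \Cref{def:ndrp} is not a hypothesis of the theorem, so you cannot use it to realise the HL traces at the LL. The paper simply chooses demonstrations whose HL projections realise those traces.
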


\subsection{Learning LL Policies via Graph Neural Networks and Imitation Learning}\label{ssec:ll-policy}
We next realise our LL policies $\llpDefn$ with a graph neural network (GNN) architecture. 

\paragraph{LL policy representation}
We describe how to construct a GNN model from $\lls$, $\hla$ and $\hlg$, and a domain $\domain = \gena{\predicates, \schemata}$ that generates an embedding for predicting the LL action $\lla$.
Given vectors $\h_1 \in \R^n, \h_2 \in \R^m$, we denote their concatenation as $\h_1 \concat \h_2 \in \R^{n+m}$.
Given $i, n \in N$, we denote the one-hot encoding $\onehot{i}{n} \in \R^n$ with all 0's except 1 in the $i$-th element.
We assume an enumeration of predicates and schemata, and let $M$ be the maximum arity of the schemata (i.e., $\max_{a \in \schemata} \abs{\var(a)}$).

\begin{figure}
    \centering
    % !TEX root =mainfile.tex

\newcommand{\descSize}{\small}
\newcommand{\sss}{0.6}
\newcommand{\xshift}{\sss*1.7cm}
\newcommand{\yshift}{\sss*1.2cm}
\newcommand{\graphDiff}{\sss*4cm}
\newcommand{\cc}{0.35*\xshift}
\newcommand{\nodeAngleA}{60}
\newcommand{\nodeAngleB}{322.5}
\newcommand{\nodeDist}{\sss*2cm}
\newcommand{\captionY}{\sss*3cm}

\newcommand{\yymid}{-0.475cm}
\newcommand{\xxx}{0.125cm}

\begin{tikzpicture}[
  font=\scriptsize,
  >=Stealth,
  message/.style={
    ->,
    dashed,
    gray!75
  },
  lime/.style={
    rectangle, 
    draw, 
    rounded corners,
    inner sep=0pt, 
    minimum width=0.9cm,
    minimum height=0.6cm,
  },
  input/.style={
    lime,
    draw=obsCol,
    fill=orange!10,
    minimum height=\sss*5cm,
  },
  output/.style={
    lime,
    draw=actCol,
    fill=blue!8,
    minimum height=\sss*5cm,
  }
]

    \node at (-0.5, \captionY) {\descSize (a) \textbf{Initial Embeddings}};
    \node at (1.675*\graphDiff, \captionY) {\descSize (b) \textbf{Message Passing}};
    \node at (3.25*\graphDiff, \captionY) {\descSize (c) \textbf{Readout}};

    \begin{scope}
    \node[lime] (a_0)  at (0, 0)    {$\h_{\textit{pick}}^{(0)}$};
    \node[lime] (o1_0) at (\nodeAngleA:\nodeDist) {$\h_{\textit{obj}}^{(0)}$};
    \node[lime] (o2_0) at (\nodeAngleB:\nodeDist) {$\h_{\textit{loc}}^{(0)}$};
    \node[lime] (g_0) at (\cc, -2.5*\yshift) {$\h_{\textit{global}}^{(0)}$};
    \draw[thick](a_0) -- (o1_0);
    \draw[thick, shorten <= -0.045cm, shorten >= -0.045cm](a_0) -- (o2_0);

    \newcommand{\inn}{-1.2}
    \coordinate (xref) at (\inn, 0);
    \node (a_n) at (xref |- a_0) {$\h^{\textit{pick}}$};
    \node (o1_n) at (xref |- o1_0) {$\h^{\textit{obj}}$};
    \node (o2_n) at (xref |- o2_0) {$\h^{\textit{loc}}$};
    \node (g_n) at (xref |- g_0) {$\h_{\textit{global}}$};

    \newcommand{\innn}{\inn - 2}
    \node[input] at (\innn, \yymid) {};
    \node (lls) at (\innn, \yymid+1cm) {$\lls$};
    \node (hla) at (\innn, \yymid) {$\hla$};
    \node (hlg) at (\innn, \yymid-1cm) {$\hlg$};

    \draw[message] (a_n) -- (a_0);
    \draw[message] (o1_n) -- (o1_0);
    \draw[message] (o2_n) -- (o2_0);
    \draw[message] (g_n) -- (g_0);
    
    \newcommand{\llsx}{0.07cm}
    \newcommand{\hlgx}{0.02cm}
    \draw[message] ([xshift=\xxx+\llsx]lls.east) -- (o1_n.west);
    \draw[message] ([xshift=\xxx+\llsx]lls.east) -- (o2_n.west);
    \draw[message] ([xshift=\xxx+\llsx]lls.east) -- (g_n.west);
    \draw[message] ([xshift=\xxx]hla.east) -- (a_n.west);
    \draw[message] ([xshift=\xxx+\hlgx]hlg.east) -- (o1_n.west);
    \draw[message] ([xshift=\xxx+\hlgx]hlg.east) -- (o2_n.west);
    \draw[message] ([xshift=\xxx+\hlgx]hlg.east) -- (g_n.west);

    \end{scope}

    \begin{scope}[xshift=0.5*\graphDiff]
    \node at (0.25, 0) {\bf$\ldots$};
    \end{scope}

    \begin{scope}[xshift=\graphDiff]
    \node[lime] (a_1)  at (0, 0)    {$\h_{\textit{pick}}^{(l)}$};
    \node[lime] (o1_1) at (\nodeAngleA:\nodeDist) {$\h_{\textit{obj}}^{(l)}$};
    \node[lime] (o2_1) at (\nodeAngleB:\nodeDist) {$\h_{\textit{loc}}^{(l)}$};
    \node[lime] (g_1) at (\cc, -2.5*\yshift) {$\h_{\textit{global}}^{(l)}$};
    \draw[thick](a_1) -- (o1_1);
    \draw[thick, shorten <= -0.045cm, shorten >= -0.045cm](a_1) -- (o2_1);
    \end{scope}

    \begin{scope}[xshift=2*\graphDiff]
    \node[lime] (a_2)  at (0,0)    {$\h_{\textit{pick}}^{(l+1)}$};
    \node[lime] (o1_2) at (\nodeAngleA:\nodeDist) {$\h_{\textit{obj}}^{(l+1)}$};
    \node[lime] (o2_2) at (\nodeAngleB:\nodeDist) {$\h_{\textit{loc}}^{(l+1)}$};
    \node[lime] (g_2) at (\cc, -2.5*\yshift) {$\h_{\textit{global}}^{(l+1)}$};
    \draw[thick](a_2) -- (o1_2);
    \draw[thick, shorten <= -0.045cm, shorten >= -0.045cm](a_2) -- (o2_2);
    \end{scope}

    \begin{scope}[xshift=2.5*\graphDiff]
    \node at (0.25, 0) {\bf$\ldots$};
    \end{scope}

    \begin{scope}[xshift=3*\graphDiff]
    \node[lime] (a_l)  at (0,0)    {$\h_{\textit{pick}}^{(L)}$};
    \node[lime] (o1_l) at (\nodeAngleA:\nodeDist) {$\h_{\textit{obj}}^{(L)}$};
    \node[lime] (o2_l) at (\nodeAngleB:\nodeDist) {$\h_{\textit{loc}}^{(L)}$};
    \node[lime] (g_l) at (\cc, -2.5*\yshift) {$\h_{\textit{global}}^{(L)}$};
    \draw[thick](a_l) -- (o1_l);
    \draw[thick, shorten <= -0.045cm, shorten >= -0.045cm](a_l) -- (o2_l);

    \newcommand{\outt}{3.75cm}
    \node[output] at (\sss*\outt, \yymid) {};
    \node (a) at (\sss*\outt, \yymid) {$\lla$};
    \draw[message] (a_l) -- ([xshift=-\xxx]a.west);
    \draw[message] (o1_l) -- ([xshift=-\xxx]a.west);
    \draw[message] (o2_l) -- ([xshift=-\xxx]a.west);
    \draw[message] (g_l)  to[out=0, in=-120] ([xshift=-\xxx]a.west);

    \end{scope}

    \draw[message] (a_1) -- (a_2);
    \draw[message] (g_1) -- (g_2);
    \draw[message] (o1_1) -- (o1_2);
    \draw[message] (o2_1) -- (o2_2);

    \draw[message] (a_1.east) -- (o1_2.west);
    \draw[message] (a_1.east) -- (o2_2.west);
    \draw[message] (o1_1.east) -- (a_2.west);
    \draw[message] (o2_1.east) -- (a_2.west);

    \draw[message] (o1_1.east) to  (g_2.north west);
    \draw[message] (a_1.east)  to[out=0, in=120] (g_2.north west);
    \draw[message] (o2_1.east) to  (g_2.north west);

    \draw[message] (g_1.north) to  (o1_1.south);
    \draw[message] (g_1.north) to  (a_1.south);
    \draw[message] (g_1.north) to  (o2_1.south);

    \draw[message] (g_2.north) to  (o1_2.south);
    \draw[message] (g_2.north) to  (a_2.south);
    \draw[message] (g_2.north) to  (o2_2.south);
\end{tikzpicture}
    \caption{
        The LL policy $\llpDefn$ represented by a GNN.
        In this example, the input action is $\hla = \hlFont{pick}(\textit{obj}, \textit{loc})$, and the resulting output is $\lla$.
        Solid lines represent graph edges, and dashed lines represent how information is passed.
        \textbf{Bold font} indicates Euclidean vectors.
    }\label{fig:gnn}
\end{figure}

Let $\hla = a(o_1, \ldots, o_{n})$, $\lls = \gena{\x_e, \set{\x_o}_{o \in \objects}}$ and $\hls = \labelling(\lls)$.
The GNN inputs are initially encoded as a set of embeddings $\h_{\globalNode}, \h_{a}, \h_{o_1}, \ldots, \h_{o_n}$ representing a global node, an action node, and object nodes, respectively, where
$\h_{\globalNode} = \x_e \concat \sum_{p() \in \hls} \onehot{p}{\abs{\predicates}} \concat \sum_{p() \in \hlg} \onehot{p}{\abs{\predicates}}$ encodes the agent's state and nullary facts that are true in the HL state and goal,
$\h_{a} = \onehot{a}{\abs{\schemata}}$ encodes the action schema, and
$\h_{o_i} = \x_{o_i} \concat \sum_{p(o_i) \in \hls} \onehot{p}{\abs{\predicates}} \concat \sum_{p(o_i) \in \hlg} \onehot{p}{\abs{\predicates}} \concat \onehot{i}{M}$ encodes an object $o_i$'s state, unary facts instantiated with $o_i$ in the HL state and goal, and its position in the action $\hla$.

The GNN algorithm then consists of the following three operations, as illustrated in \Cref{fig:gnn}.
The $\W$ symbols denote learnable weights.
\begin{enumerate}[label=(\alph*)]
    \item The initial encodings are embedded: $\h_{\globalNode}^{(0)} = \W_g^{(0)} \h_{\globalNode}$, $\h_a^{(0)} = \W_a^{(0)} \h_a$, $\h_{o_i}^{(0)} = \W_o^{(0)} \h_{o_i}$.
    \item Next, $L$ rounds of {message passing} are performed.
Object embeddings are aggregated via element-wise max
$\h_o^{(l)} = \max_{i=1, \ldots, n} \h_{o_i}^{(l)}$ 
and node embeddings are updated by
$\h_{\globalNode}^{(l+1)} = \sigma(\W_g^{(l)}(\h_{\globalNode}^{(l)} + \h_a^{(l)} + \h_o^{(l)}))$, 
$\h_a^{(l+1)} = \sigma(\W_a^{(l)}(\h_{\globalNode}^{(l+1)} + \h_a^{(l)} + \h_o^{(l)}))$, and 
$\h_{o_i}^{(l+1)} = \sigma(\W_o^{(l)}(\h_{\globalNode}^{(l+1)} + \h_a^{(l)} + \h_{o_i}^{(l)}))$ 
for nonlinearity $\sigma$.
    \item Finally, a feed-forward {readout} layer is then applied to the final embeddings $\h_{\globalNode}^{(L)} + \h_a^{(L)} + \h_{o}^{(L)}$ to predict the LL action $\lla$.
\end{enumerate}

\paragraph{Learning method}
Following the HL learning method,  we can compute a single dataset consisting of tuples $\gena{\lls, \hla, \hlg, \lla}$ from the LL demonstrations corresponding to inputs and outputs of $\llp$.
We optimise the GNN to minimise the mean squared error (MSE) between the predicted LL actions and the ground truth LL actions in the dataset $\dataset$.
In place of MSE optimisation, we could alternatively learn a generative diffusion policy \citep{ho.etal.2020,chi.etal.2023}. We leave exploration of this alternative to future work.

% !TEX root =mainfile.tex

\section{Experiments}\label{sec:experiments}
We implement our approach from \Cref{sec:method} in a system called \genplan{} and conduct experiments on extended MetaWorld~\citep{yu.etal.2019,mclean.etal.2025} benchmarks.
We compare our approach to various VLA, end-to-end, and planning systems to demonstrate the efficacy of \genplan{} for learning bilevel policies that can generalise to longer horizons and more objects than seen in its training demonstrations.

\paragraph{Setup}
We evaluate 7 imitation learning architectures and 2 VLA baselines across 8 MuJoCo robot simulation environments.
For each approach and environment, we train 3 models on 200 goal-achieving trajectories from problems with 3 objects.
We train the GNN LL policy in \Cref{ssec:ll-policy} with hyperparameters specified in Appendix \ref{app:hyperparams}.
Each trained policy is evaluated on problems with 1 to 10 objects from each environment with 10 episodes for each object and at most 2048$n$ steps for each episode where $n$ is the number of objects.
As a result, we conduct up to $9 \times 8 \times 3 \times 10 \times 10 = 21,600$ episodes' worth of evaluations.
Training is performed with an NVIDIA L40 GPU (48GB memory) while evaluation is performed on Intel Xeon Gold 6338 CPUs (2.00 GHz).

\newcommand{\www}{0.3\textwidth}
\begin{wraptable}{r}{\www}
    \centering
    \renewcommand{\arraystretch}{0.9}
    \caption{Environments and their uncertainty attributes.}\label{tab:setup}
    \scriptsize
    % !TEX root =mainfile.tex

\newcommand{\yy}{\cmark}
\newcommand{\nn}{}
    \tabcolsep 2pt
    \begin{tabularx}{\www}{l *{8}{>{\centering\arraybackslash}X}}
        % \toprule
        % & \multicolumn{8}{c}{Environments}
        % \\
        % \cmidrule(l){2-9}
        Uncertainty
        % ---------------------------
        & \header{\blocks} & \header{\blocksNoisy} & \header{\colour} & \header{\colourNoisy} & \header{\factory} & \header{\factoryNoisy} & \header{\gacha} & \header{\gachaNoisy} \\
        \cmidrule{1-1} \cmidrule(l){2-9}
        Exogenous
        & \nn & \yy & \nn & \yy & \nn & \yy & \nn & \yy \\
        Endogenous
        & \nn & \nn & \yy & \yy & \nn & \nn & \yy & \yy \\
        State
        & \nn & \nn & \nn & \nn & \yy & \yy & \yy & \yy \\
        % \bottomrule
    \end{tabularx}

\end{wraptable}

\paragraph{Environments}
We evaluate the aforementioned methods on 8 environments that compositionally extend the tasks from the MetaWorld benchmark suite~\citep{yu.etal.2019,mclean.etal.2025}.
There are 4 stationary environments (\blocks{}, \factory{}, \colour{}, \gacha{}) each with a corresponding variant (\blocksNoisy{}, \factoryNoisy{}, \colourNoisy{}, \gachaNoisy{}) where objects may randomly teleport around throughout the episode.
Additional environment semantics and visualisations are provided in Appendix \ref{app:benchmarks}.
Each environment induces a problem distribution with different initial states and goals, parameterised by the number of objects.
\Cref{tab:setup} summarises the environments and their uncertainty attributes:
\textbf{exogenous} transition uncertainty that is not modelled in the HL abstraction,
\textbf{endogenous} transition uncertainty that is modelled in the HL abstraction, and
\textbf{state} uncertainty and partial observability of object attributes and locations.

\paragraph{Methods}
We evaluate and compare against the following methods to demonstrate the robustness of our approach to various forms of uncertainty. 
We introduce a mix of baselines that correspond to end-to-end, planning, and replanning approaches, with more details in Appendix \ref{app:methods}.
\begin{itemize}
    \item \textbf{\oracle{}}: a hand-coded policy that serves as the expert and upper bound for the experiments
    \item \textbf{\vla{}}: the SmolVLA open-source 0.45B parameter VLA model \citep{shukor.etal.2025} which was shown to outperform state-of-the-art VLAs with billions of parameters.
    \item \textbf{\vlamw{}}: \vla{} fine-tuned on 2,500 episodes of the original MetaWorld benchmark suite
    \item \textbf{\purenn{}}: a GNN policy operating over LL observations with no access to HL facts and actions
    \item \textbf{\pddlnn{}}: a GNN policy operating over LL observations and HL facts but no access to HL actions
    \item \textbf{\detplan{}}: the GNN policy from \Cref{ssec:ll-policy} with a planner \citep{helmert.2006} in place of an HL policy
    \item \textbf{\regplan{}}: the GNN policy from \Cref{ssec:ll-policy} with a nondeterministic planner \citep{muise.etal.2024} for generating a problem-specific HL policy
    \item \textbf{\detreplan{}}: same as \detplan{} but replan (recompute a new plan) if no HL action is returned due to uncertainty \citep{pettersson.2005,yoon.etal.2007,little.etal.2007,fritz.mcilraith.2007,curtis.etal.2022a,curtis.etal.2024}
    \item \textbf{\regreplan{}}: same as \regplan{} but replan (compute a new policy) if no HL action is returned
    \item \textbf{\genplan{}} {\color{blue}(new)}: the bilevel policy for bilevel planning approach using the problem-agnostic HL policy from \Cref{ssec:hl-policy} and the GNN LL policy from \Cref{ssec:ll-policy} 
\end{itemize}

\paragraph{(Q1) How do VLAs perform on the benchmarks?}
From \Cref{tab:results} we observe that pretrained VLA models perform poorly on the benchmarks and fail to solve any task.
We observed that this is consistent with existing evaluations of VLAs for the original MetaWorld benchmarks \citep{shukor.etal.2025}, namely how SmolVLA and other VLA approaches achieve at most 45\% success rate on \textit{hard} MetaWorld problems \citep{seo.etal.2022}, which includes the $n=1$ case of Blocks.
Furthermore, we observed that \vlamw{} can only solve Blocks for $n=1$ when the block and goal locations are in the same area as in the training set, but not outside.
However, we note that VLAs are operating on image and text inputs of the environment, while all other methods assumed processed representations of the inputs.

\paragraph{(Q2) Does \genplan{} generalise to longer horizons than seen in training?}
From \Cref{fig:results}, we observe that \genplan{} overall is able to generalise to environments beyond the number of objects that it is trained on, and often generalises better than the end-to-end GNN baselines \purenn{} and \pddlnn{} whose performance often drops on environments beyond 6 objects.
This can be attributed to the expressivity limits of GNNs \citep{morris.etal.2019,xu.etal.2019a} that prohibit them from extrapolating to longer-horizon HL planning problems as empirically and theoretically shown in previous works \citep{staahlberg.etal.2022,silver.etal.2022,silver.etal.2023}.
The best performing seeds often match the upper bound performance of the oracle.
From \Cref{tab:results}, \genplan{} matches or outperforms all other baselines with two exceptions.
Failures often occur in the covariate shift in the LL execution of HL actions, which suggests that further improvements can be gained from additional post-training, e.g., via RL or DAgger \citep{ross.etal.2011}.

\begin{table}[t]
    \scriptsize
    \tabcolsep 3pt
    \caption{
        Average success rate ($\uparrow$) and standard deviation of various methods across number of objects and environments.
        The top 2 scores per column are highlighted.
    }\label{tab:results}
    \begin{tabularx}{\textwidth}{ccYYYYYYYYY}
\toprule
Type & Method & {\blocks{}} & {\blocksnoisy{}} & {\factory{}} & {\factorynoisy{}} & {\colour{}} & {\colournoisy{}} & {\gacha{}} & {\gachanoisy{}} & Total \\
\midrule
\multirow{2}{*}{VLA} & \vla{} & \tableCell{0.00}{0.0} & \tableCell{0.00}{0.0} & \tableCell{0.00}{0.0} & \tableCell{0.00}{0.0} & \tableCell{0.00}{0.0} & \tableCell{0.00}{0.0} & \third{\tableCell{0.00}{0.0}} & \third{\tableCell{0.00}{0.0}} & \tableCell{0.00}{0.0} \\
 & \vlamw{} & \tableCell{0.00}{0.0} & \tableCell{0.00}{0.0} & \tableCell{0.00}{0.0} & \tableCell{0.00}{0.0} & \tableCell{0.00}{0.0} & \tableCell{0.00}{0.0} & \third{\tableCell{0.00}{0.0}} & \third{\tableCell{0.00}{0.0}} & \tableCell{0.00}{0.0} \\
\cmidrule{2-11}
\multirow{2}{*}{GNN} & PureNN & \tableCell{0.00}{0.0} & \tableCell{0.00}{0.0} & \tableCell{0.00}{0.0} & \tableCell{0.00}{0.0} & \tableCell{0.00}{0.0} & \tableCell{0.00}{0.0} & \third{\tableCell{0.00}{0.0}} & \third{\tableCell{0.00}{0.0}} & \tableCell{0.00}{0.0} \\
 & PddlNN & \tableCell{0.33}{0.4} & \tableCell{0.20}{0.3} & \third{\tableCell{0.18}{0.3}} & \tableCell{0.07}{0.2} & \tableCell{0.12}{0.2} & \second{\tableCell{0.36}{0.3}} & \second{\tableCell{0.51}{0.3}} & \second{\tableCell{0.60}{0.4}} & \tableCell{0.30}{0.2} \\
\cmidrule{2-11}
\multirow{4}{*}{Planning} & DetPlan & \third{\tableCell{0.90}{0.1}} & \tableCell{0.34}{0.3} & \tableCell{0.00}{0.0} & \tableCell{0.00}{0.0} & \tableCell{0.00}{0.0} & \tableCell{0.00}{0.0} & \third{\tableCell{0.00}{0.0}} & \third{\tableCell{0.00}{0.0}} & \tableCell{0.15}{0.3} \\
 & NdtPlan & \first{\tableCell{\mathbf{0.99}}{0.0}} & \third{\tableCell{0.39}{0.4}} & \tableCell{0.00}{0.0} & \tableCell{0.00}{0.0} & \first{\tableCell{\mathbf{0.61}}{0.2}} & \tableCell{0.01}{0.0} & \third{\tableCell{0.00}{0.0}} & \third{\tableCell{0.00}{0.0}} & \tableCell{0.25}{0.4} \\
 & DetReplan & \second{\tableCell{0.98}{0.0}} & \first{\tableCell{\mathbf{0.97}}{0.1}} & \second{\tableCell{0.94}{0.1}} & \second{\tableCell{0.59}{0.3}} & \tableCell{0.00}{0.0} & \tableCell{0.00}{0.0} & \third{\tableCell{0.00}{0.0}} & \third{\tableCell{0.00}{0.0}} & \third{\tableCell{0.44}{0.4}} \\
 & NdtReplan & \first{\tableCell{\mathbf{0.99}}{0.0}} & \first{\tableCell{\mathbf{0.97}}{0.0}} & \second{\tableCell{0.94}{0.1}} & \third{\tableCell{0.51}{0.3}} & \third{\tableCell{0.26}{0.3}} & \third{\tableCell{0.17}{0.3}} & \third{\tableCell{0.00}{0.0}} & \third{\tableCell{0.00}{0.0}} & \second{\tableCell{0.48}{0.4}} \\
\cmidrule{2-11}
{\color{blue}(new)} & \genplan{} & \first{\tableCell{\mathbf{0.99}}{0.0}} & \second{\tableCell{0.95}{0.1}} & \first{\tableCell{\mathbf{0.97}}{0.0}} & \first{\tableCell{\mathbf{0.62}}{0.3}} & \second{\tableCell{0.54}{0.2}} & \first{\tableCell{\mathbf{0.99}}{0.0}} & \first{\tableCell{\mathbf{0.65}}{0.3}} & \first{\tableCell{\mathbf{0.64}}{0.3}} & \first{\tableCell{\mathbf{0.79}}{0.2}} \\
\bottomrule
\end{tabularx}

\end{table}

\begin{figure}[t]
    \includegraphics{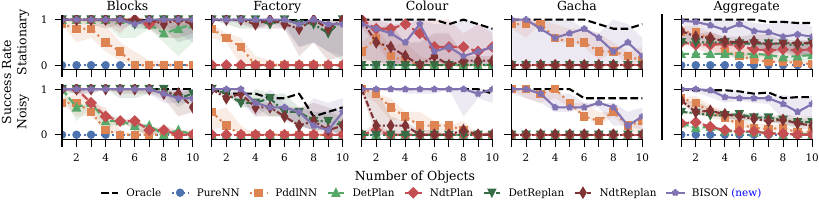}
    \caption{
        Median (line) and range (shaded) of success rate ($\uparrow$) of methods across number of objects and environments.
        Results for VLA baselines are omitted as they do not complete any tasks.
    }\label{fig:results}
\end{figure}

\paragraph{(Q3) Is \genplan{} robust to uncertainty and open-world environments?}
As we have seen in the previous question and in \Cref{tab:results}, \genplan{} generally performs best overall, with the exception of two environments where variance in LL policy execution causes failures.
Next, we observe that the next best performing approach is \ndtreplan{} which computes a new HL policy $\hlp$ per environment episode to be used in the bilevel policy execution framework, and recomputes $\hlp$ if it fails to return an action.
However, we observe that even with replanning, \ndtreplan{} is less robust to the more complex Colour environments, and furthermore cannot support the open-world Gacha environment.

\begin{wrapfigure}{r}{0.3\textwidth}
    \includegraphics{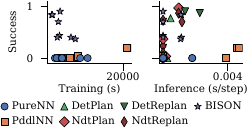}
    \caption{Success rate ($\uparrow$) vs. time ($\downarrow$) for training and inference with 10 objects.}
    \label{fig:time}
\end{wrapfigure}

\paragraph{(Q4) Is \genplan{} more efficient than (re)planning and end-to-end approaches?}
From \Cref{fig:time}, we observe that the end-to-end \purenn{} and \pddlnn{} approaches take longer to train and execute as they encode more information in the LL GNN policy architecture than the one from \Cref{ssec:ll-policy}:
\purenn{} encodes all LL information and \pddlnn{} encodes all LL and HL information, while \genplan{}'s LL policy only encodes the HL action and corresponding object information.
Single-shot planners (\detplan, \ndtplan) are comparable to \genplan{} as planning costs are amortised over the episode, but achieve poor performance.
Replanning methods (\detreplan, \ndtreplan) take at least as long as \genplan{} due to replanning upon encountering unexpected uncertainty.

\begin{wrapfigure}{r}{0.3\textwidth}
    \includegraphics{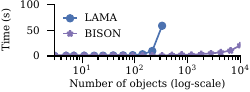}
    \caption{HL solving time ($\downarrow$) vs. number of objects.}
    \label{fig:planning}
\end{wrapfigure}

\paragraph{(Q5) Do learned HL policies generalise to arbitrary numbers of objects?}
We answer this question by inspecting the learned policies and evaluating their performance on HL planning problems over numbers of objects.
Learned HL policies in \genplan{} are symbolic and hence can be interpreted manually or by an LLM to generalise over arbitrary numbers of objects, as displayed in Appendix \ref{app:interpret}.
We also perform experiments on corresponding PDDL \blocks{} planning problems ranging from 3 to 10,000 blocks.
We compare \genplan{}'s solving time to a state-of-the-art PDDL planner, LAMA \citep{richter.westphal.2010}, with a 100s timeout.
We observe from \Cref{fig:planning} that \genplan{} solves problems with 10,000 blocks in a few seconds while LAMA struggles with a few hundred objects.

\section{Discussion, Limitations, and Conclusion}\label{sec:conclusion}
We introduced \genplan{}, a system to learn bilevel policies $(\hlp, \llp)$ from LL demonstrations paired with HL goals that generalise to long-horizon problems.
We learn $\hlp$ policies from abstracted symbolic traces of the LL demonstrations using goal regression and inductive generalisation to produce first-order, condition-action rules that can generalise to arbitrary numbers of objects, and 
realise our $\llp$ policies via graph neural networks with fewer than 33k parameters for executing HL actions returned by $\hlp$ in the LL action space.
The novelty of our approach is at least two-fold: (1) in contrast to existing bilevel planning approaches that search for HL plans over symbolic abstractions before refining them in the LL environment, \genplan{} inductively generalises HL policies from LL demonstrations without the need for search, and (2) the encoding of our policies as first-order, condition-action rules is novel in the context of bilevel planning and facilitates open-world planning and generalisation. Through this novel insight and approach, we can learn compact policies that demonstrate strong generalisation capabilities and efficiency compared to state-of-the-art VLA, end-to-end, and planning baselines on extended MetaWorld benchmarks.

\paragraph{Limitations}
Our approach inherits the limitations of bilevel planning, in particular, the assumption of a learned or given HL abstraction that provides useful guidance for LL planning (there is much research on how to find good abstractions as discussed in \Cref{sec:background}).
Another limitation of our approach is that we have no guarantees of optimality and generalisation of the LL policies.

\section*{Acknowledgements}
We gratefully acknowledge funding from the Natural Sciences and Engineering Research Council of Canada (NSERC) and the Canada CIFAR AI Chairs Program. 
Till Hofmann was funded by the Federal Ministry of Education and Research (BMBF) and the Ministry of Culture and Science of the German State of North Rhine-Westphalia (MKW) under the Excellence Strategy of the Federal Government and the L\"{a}nder. 
Resources used in preparing this research were provided, in part, by the Province of Ontario, the Government of Canada through CIFAR, and companies sponsoring the Vector Institute.
We thank Frieda Rong and William Shen for helpful discussions.

%%%%%%%%%%%%%%%%%%%%%%%%%%%%%%%%%%%%%%%%%%%%%%%%%%%%%%%%%%%%%%%%%%%%%%%%%%%%%%
\bibliography{dillon}
\bibliographystyle{plain}

%%%%%%%%%%%%%%%%%%%%%%%%%%%%%%%%%%%%%%%%%%%%%%%%%%%%%%%%%%%%%%%%%%%%%%%%%%%%%%
\clearpage
\appendix

\crefalias{section}{appendix}
\crefalias{subsection}{appendix}

\begin{center}
    \LARGE {\bf Appendix}
\end{center}

\tableofcontents
\clearpage

% !TEX root =mainfile.tex

\section{Summary of Definitions and Notation}\label{app:notation}
We use $\llFont{B}$old $\llFont{F}$ace font to distinguish low-level (LL) components from high-level (HL) components denoted via $\mathcal{C}$alligraphic and $\mathit{I}$talics font.
\Cref{tab:notation} summarises the definitions and notation used in the paper.

\begin{table}[ht!]
    \caption{Summary of definitions and notation.}\label{tab:notation}
    \footnotesize
    \begin{tabularx}{\textwidth}{l X p{4.6cm}}
        \toprule
        Term & Description & Notation \\
        \midrule
        problem & a tuple of states, actions, initial state and goal states & $\problem = \langle \states, \actions, \init, \goal \rangle$ \\
        bilevel problem & a LL and HL problem and a labelling function & $\bilevelProb=\bilevelDefn$ \\
        policy & a probability distribution over actions conditioned on a state and other arguments & $\pi(\action \mid \state, \ast)$ \\
        \midrule
        LL state & low-level object-centric, ego-centric state representation & $\lls = (\x_e, \set{\x_o}_{o \in \objects})$,\newline$\x_e \in \R^n, \x_o \in \R^m$ \\
        LL action & low-level action & $\lla \in \R^d$ \\
        LL policy & low-level policy & $\llpDefn$ \\
        \midrule
        HL state & high-level state consisting of a set of facts & $\hls$ \\
        HL action & high-level action induced by an action schema & $\hla = a(o_1, \ldots, o_{n_a})$ \\
        HL policy & high-level policy & $\hlpDefn$ \\
        \midrule
        domain & a set of predicates and action schemata & $\domain = \langle \predicates, \schemata \rangle$ \\
        labelling function & a function mapping LL states to HL states & $\labelling$ \\
        predicate & symbol with argument terms & $p(x_1, \ldots, x_{n_p})$ \\
        fact & predicate with all arguments instantiated with objects & $p(o_1, \ldots, o_{n_p})$ \\
        action schema & temporally abstracted action with argument terms & $a(x_1, \ldots, x_{n_a})$ \\
        \midrule
        dataset & LL demonstrations paired with HL goals & $\dataset = \datasetDefn$ \\[1ex]
        HL dataset & HL action traces paired with HL goals & $\hldataset = \hldatasetDefn$ \\
        \midrule
        succession & the successors of a state under an action & $\succ(\state, \action)$ \\
        regression & the regressors of a goal under an action & $\regr(\goal, \action)$ \\
        \bottomrule
    \end{tabularx}
\end{table}

% !TEX root =mainfile.tex

\section{Theory}\label{app:theory}
We provide additional context and the proof of \Cref{thm:formal}.
The argument requires two ingredients:
(1) the HL component of $\bilevelProb$ admits a solution that proceeds by sequentially achieving singleton goal facts (Appendix \ref{app:gi}), and 
(2) $\dataset$ exhibits enough HL abstracted demonstrations $\hldataset$ to extract HL rules covering every relevant singleton-goal subproblem (Appendix \ref{app:equiv}).

\subsection{Goal Independence}\label{app:gi}
We assume HL problems are represented in the relational form from \Cref{sec:background}.
Given an HL problem $\hlprob = \gena{\states^{\hl}, \actions^{\hl}, \hls_0, \hlg}$, an HL state $\hls \in \states^{\hl}$, and a set of facts ${\hlg}'$, let $\hlprob_{\hls, {\hlg}'}$ denote the same HL problem with initial state replaced by $\hls$ and goal replaced by ${\hlg}'$, i.e. $\hlprob_{\hls, {\hlg}'} = \gena{\states^{\hl}, \actions^{\hl}, \hls, {\hlg}'}$.
Let $init(\hlprob)$ and $goal(\hlprob)$ denote the initial state and goal of $\hlprob$, respectively.

\begin{definition}[Goal Independence]\label{def:gi}
    An HL problem $\hlprob$ exhibits \emph{goal independence (GI)} if for any ordering $\hlg_1, \ldots, \hlg_n$ of the facts in $goal(\hlprob)$, the following procedure constructs a solution for $\hlprob$:
    set $\hlp \la \emptyset$ and $S \gets \seta{init(\hlprob)}$ and for each $i = 1, \ldots, n$,
    (a) extend $\hlp$ with solution policies for $\hlprob_{\hls, \seta{\hlg_i}}$ for all $\hls \in S$ which do not delete any fact $\hlg_j$ for $i \not= j$, and
    (b) add all states in the support of an $\hlp_i$-trajectory ending in a state containing $\hlg_i$ to $S$ that are not yet covered by $\hlp$.
    We further say that $\hlprob$ exhibits \emph{$C$-bounded goal independence ($\text{GI}_C$)} for $C \in \N$ if every $\hlp_i$ in step (a) has a preimage of size at most $C$.
\end{definition}

A bilevel planning problem $\bilevelProb = \bilevelDefn$ exhibits (resp.~$C$-bounded) goal independence whenever its HL component $\hlprob$ does.
The bound $C$ in $\text{GI}_C$ ensures each singleton-goal subproblem admits bounded solutions which in turn bounds the size of the rule database extracted by goal regression in \Cref{alg:hl-policy-learning}.

\subsection{Object-Renaming Equivalence of HL Problems}\label{app:equiv}
Goal regression on a single demonstration produces lifted rules that, by virtue of being parameterised by free variables, generalise across object-renaming.
We make this precise via an equivalence relation on HL problems.

\begin{definition}[Object-Renaming Equivalence]\label{def:equiv}
    Two HL problems $\hlprob_1, \hlprob_2$ over the same domain $\domain$ are \emph{equivalent}, written $\hlprob_1 \sim \hlprob_2$, if there exists a bijection $f : \objects_1 \to \objects_2$ such that the lifted map
    \begin{align}
        F(\hls) := \seta{p(f(o_1), \ldots, f(o_n)) \mid p(o_1, \ldots, o_n) \in \hls}
    \end{align}
    satisfies $F(init(\hlprob_1)) = init(\hlprob_2)$ and $F(goal(\hlprob_1)) = goal(\hlprob_2)$.
\end{definition}

\begin{proposition}\label{prop:equiv}
    The relation $\sim$ is an equivalence relation.
    Moreover, suppose $\hlprob_1 \sim \hlprob_2$ via a bijection $f$, and extend $F$ to ground actions by $F(a(o_1, \ldots, o_n)) := a(f(o_1), \ldots, f(o_n))$.
    Given any HL policy $\hlp_1$ on $\hlprob_1$, define $F_* \hlp_1$ on $\hlprob_2$ by
    \begin{align}
        (F_* \hlp_1)(F(\hla) \mid F(\hls), F(\hlg)) := \hlp_1(\hla \mid \hls, \hlg).
    \end{align}
    Then $\sigma = \gena{\hls_0, \hls_1, \ldots}$ is a $\hlp_1$-trajectory in $\hlprob_1$ if and only if $F(\sigma) := \gena{F(\hls_0), F(\hls_1), \ldots}$ is an $(F_* \hlp_1)$-trajectory in $\hlprob_2$, and consequently $\hlp_1$ is a solution for $\hlprob_1$ if and only if $F_* \hlp_1$ is a solution for $\hlprob_2$.
    In particular, a sequence of HL actions $a_0(o_0^1, \ldots, o_0^{n_0}), \ldots, a_{m-1}(o_{m-1}^1, \ldots, o_{m-1}^{n_{m-1}})$ reaches a goal state in $\hlprob_1$ if and only if its image under $F$ reaches a goal state in $\hlprob_2$.
\end{proposition}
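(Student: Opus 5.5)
We handle the two parts of Proposition~\ref{prop:equiv} separately. The equivalence-relation part is routine. The trajectory part reduces to one commutation lemma: the lifted map $F$ commutes with every set-theoretic operation used to define applicability, succession and goal satisfaction.

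\emph{Equivalence relation.} For reflexivity we take $f = \mathrm{id}_{\objects}$, whose lift $F$ is the identity. For symmetry, if $f$ witnesses $\hlprob_1 \sim \hlprob_2$, then $f^{-1}$ is a bijection. Its lift is $F^{-1}$, because lifting respects composition, i.e.\ $\widehat{g \circ f} = \hat g \circ \hat f$ factwise. Applying $F^{-1}$ to $F(init(\hlprob_1)) = init(\hlprob_2)$ and to the corresponding goal identity gives $\hlprob_2 \sim \hlprob_1$. For transitivity, given $f : \objects_1 \to \objects_2$ and $g : \objects_2 \to \objects_3$, we use $g \circ f$; its lift is $G \circ F$, which maps $init$ to $init$ and $goal$ to $goal$.

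\emph{Commutation lemma.} Since $f$ is a bijection, $F$ is a bijection from the facts over $\objects_1$ to the facts over $\objects_2$, with inverse the lift of $f^{-1}$. A bijection on the ground set preserves unions, differences, intersections and inclusion in both directions:
\[
F(\hls \setminus D) = F(\hls) \setminus F(D), \qquad F(\hls \cup A) = F(\hls) \cup F(A), \qquad X \subseteq Y \iff F(X) \subseteq F(Y).
\]
Ground actions are obtained from schemata by substituting objects for variables. Substituting and then renaming is the same as substituting the renamed objects, so
\[
\pre(F(\hla)) = F(\pre(\hla)), \qquad \add_i(F(\hla)) = F(\add_i(\hla)), \qquad \del_i(F(\hla)) = F(\del_i(\hla)),
\]
with the outcome index $i$ preserved. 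Together these give three facts:
\begin{enumerate}[label=(\roman*)]
    \item $\hla$ is applicable in $\hls$ iff $F(\hla)$ is applicable in $F(\hls)$;
    \item $\succ(F(\hls), F(\hla)) = F(\succ(\hls, \hla))$ outcome by outcome, using \eqref{eq:succ};
    \item $\hlg \subseteq \hls$ iff $F(\hlg) \subseteq F(\hls)$.
\end{enumerate}
We also note that $F_*\hlp_1$ is well defined and total on $\hlprob_2$. Every state, goal and ground action of $\hlprob_2$ is the $F$-image of a unique counterpart in $\hlprob_1$, so the defining equation determines each value exactly once.

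\emph{Trajectories and solutions.} We prove by induction on length that $\sigma$ is a $\hlp_1$-trajectory iff $F(\sigma)$ is an $(F_*\hlp_1)$-trajectory. The base case is $F(init(\hlprob_1)) = init(\hlprob_2)$. For the inductive step, $\hla$ has nonzero support under $\hlp_1(\cdot \mid \hls_k, \hlg)$ iff $F(\hla)$ has the same support under $(F_*\hlp_1)(\cdot \mid F(\hls_k), F(\hlg))$, by definition of $F_*$. By (i) and (ii), $\hls_{k+1} \in \succ(\hls_k, \hla)$ iff $F(\hls_{k+1}) \in \succ(F(\hls_k), F(\hla))$. The converse direction is the same argument applied with $F^{-1}$.

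For the solution claim, $F$ induces a bijection between the trajectory trees of the two problems, and by (iii) it maps goal-reaching trajectories exactly onto goal-reaching trajectories. The deterministic special case (an action sequence) is immediate, since it is the one-branch instance of the same argument.

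\emph{Expected obstacle.} The step I expect to need the most care is passing from ``support-level'' correspondence to ``reaches the goal with probability $1$''. The HL outcome distributions are not given explicitly. I would handle this in one of two ways. First, I would require that renaming preserves the distribution over indexed outcomes $i$; this holds because outcomes are schema-level and unaffected by object names. The bijection of trajectory trees is then measure preserving. Alternatively, I would use the execution-set characterisation via $\exec$ from Definition~\ref{def:ndrp}. There, a solution is one that reaches the goal under all fair resolutions of nondeterminism, and the tree bijection suffices directly.
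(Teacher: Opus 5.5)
Your proposal is correct and takes essentially the same route as the paper's proof: the equivalence properties come from identity, inverse and composition of bijections, and $F$ commutes with applicability and the successor function because these are purely syntactic. This gives a support-preserving bijection of trajectories under which goal states correspond, and the action-sequence claim is the open-loop special case. You are more explicit than the paper, which passes directly from the trajectory bijection to the solution claim. Your remark on the probability-$1$ step is apt, but renaming-invariance of the outcome distributions is a modelling assumption, not something that follows from the paper's definitions. Those definitions specify only the outcome sets, and the paper's proof makes the same assumption tacitly.
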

\begin{proof}
Reflexivity, symmetry, and transitivity follow from the analogous properties of bijective functions.
For the policy statement, action applicability ($\pre(\hla) \subseteq \hls$) and the successor function are determined syntactically by the names of predicates and schemata, which $f$ leaves unchanged; hence $F$ commutes with $\succ$ and $\hla$ is applicable in $\hls$ iff $F(\hla)$ is applicable in $F(\hls)$.
By construction $F_* \hlp_1$ assigns identical support, so $\hlp_1$- and $(F_* \hlp_1)$-trajectories are in bijection via $F$, and goal states correspond by $F(goal(\hlprob_1)) = goal(\hlprob_2)$.
The action-sequence claim is the special case where $\hlp_1$ is the deterministic open-loop policy that prescribes $a_i(o_i^1, \ldots, o_i^{n_i})$ at step $i$.
A consequence of \Cref{prop:equiv} is that lifted rules extracted from one HL problem transfer verbatim to all problems in the same equivalence class.
\end{proof}

\subsection{Generalisation Result}
We can now state the formal version of the informal theorem in \Cref{sec:method}.
The proof relies on a finite covering argument: under $\text{GI}_C$, the relevant singleton-goal subproblems form a finite collection up to $\sim$, so a finite dataset suffices for the HL rule database to generalise to every test problem.

\setcounter{theorem}{0}
\begin{theorem}
    Let $\domain = \gena{\predicates, \schemata}$ be an HL domain, $\labelling$ a labelling function, and $C \in \N$.
    There exists a finite dataset $\dataset$ such that the HL policy learned from $\dataset$ via \Cref{alg:hl-policy-learning} solves any HL planning problem $\hlprob$ conforming to $\domain$ and satisfying $C$-bounded goal independence ($\text{GI}_C$).
\end{theorem}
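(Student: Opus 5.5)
The proof will be a finite covering argument that combines \Cref{def:gi} with \Cref{prop:equiv}. Fix $\domain$ and $C$. Under $\text{GI}_C$, solving any conforming $\hlprob$ reduces to solving a sequence of singleton-goal subproblems $\hlprob_{\hls,\seta{\hlg_i}}$. Each of these admits a solution policy $\hlp_i$ whose preimage, meaning the set of states on which it must act, has size at most $C$, and which never deletes the other goal facts.

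\textbf{Step 1: the relevant pieces are finite up to renaming.} What matters for $\hlp_i$ is only the facts that goal regression retains. Those are the preconditions of the actions used along its trajectories, together with the single goal fact. Since the preimage has at most $C$ states and each action mentions at most $M$ objects, a $\hlp_i$-trajectory from a preimage state involves at most $C$ distinct actions. So the regressed conditions involve at most $K := C\cdot M + \max_p n_p$ objects, and every relevant fact is built from $\predicates$ over these objects. There are therefore only finitely many $\sim$-classes of such ``relevant restricted subproblems'' $\hlprob_{\hls|_{K},\seta{g}}$, where $\hls|_K$ is the restriction of $\hls$ to the objects relevant to the solution.

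\textbf{Step 2: build the dataset.} From each class I pick a representative and a bounded solution $\hlp_i$. For every trajectory of $\hlp_i$, I take a LL demonstration whose abstraction under $\labelling$ is that HL trace, with goal $\seta{g}$. I am assuming such demonstrations exist, e.g.\ via NDRP, \Cref{def:ndrp}. Together these form the finite $\dataset$.

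\textbf{Step 3: the learned rules fire correctly.} Running \Cref{alg:hl-policy-learning} on a demonstrated trace regresses $\seta{g}$ backwards, producing ground rules $\lift(\hla_j,\hls',\cdot)$ with priority equal to the distance to the goal. Two facts then need to be checked. First, for any test problem and current state $\hls$ with an unachieved goal fact $g$, the restriction of $(\hls, g)$ to the relevant objects is $\sim$-equivalent to some representative via a bijection $f$. By \Cref{prop:equiv}, the grounding of the lifted rule under $f$ is applicable, because its state condition is contained in $\hls$ and its goal condition is contained in $\hlg\setminus\hls$. It prescribes $F(\hla)$, and the resulting trajectories correspond under $F$. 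Second, rules come from goal regression, so each applicable rule with priority $k$ guarantees that following actions with strictly decreasing priorities reaches a state satisfying its goal condition. This holds under every nondeterministic outcome, because $\regr$ ranges over all effects $i$ and requires $\del_i\cap\hlg=\emptyset$. Lowest-priority selection therefore makes progress. Since achieved goal facts are never deleted, by the non-deletion clause of GI, and the selected goal fact lies in $\hlg\setminus\hls$, induction on the number of unachieved goal facts shows the policy achieves $\hlg$ with probability 1. Here I also use the fact that the order in which rules pick goal facts is arbitrary, which GI permits.

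\textbf{Main obstacle.} The hardest step is the interaction between different rules. A rule learned for one goal fact, with a low priority, might fire in a state where it is irrelevant or harmful to the current subgoal. Rules from different subgoals might also interleave, so that no single $\hlp_i$ is actually followed. I would handle this by arguing that any applicable ground rule, whichever goal fact it targets, is itself the start of a valid regressed trace toward its own goal fact. Priorities then act as a ranking function: a lexicographic measure of (number of unachieved goal facts, minimum applicable priority) strictly decreases in expectation. Nondeterministic outcomes could in principle send the state into an uncovered region. This is exactly what clause (b) of \Cref{def:gi} rules out: all such states lie in $S$ and are covered by Step 1. I expect this clause, together with the $C$ bound, to carry the argument.
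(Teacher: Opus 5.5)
Your route is the paper's: a finite cover of the relevant singleton-goal subproblems up to $\sim$, transfer of lifted rules via \Cref{prop:equiv}, and chaining of singleton subgoals via goal independence. Steps~1 and~2 are fine, and in places more careful than the paper. Restricting $\hls$ to the at most $C\cdot M+\max_p n_p$ relevant objects is what actually makes the classes finite, since unrestricted singleton-goal subproblems have unboundedly many objects. Demonstrating every trajectory of a nondeterministic $\hlp_i$ (restrict to acyclic ones, or there are infinitely many) and flagging LL realisability are both things the paper leaves implicit.

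The gap is the progress argument in Step~3 and in your main-obstacle paragraph. $\regr$ returns one condition per outcome, $(\hlg\setminus\add_i(\hla))\cup\pre(\hla)$. A state satisfying it is guaranteed to reach the next regressed condition only if outcome $i$ occurs; $\del_i\cap\hlg=\emptyset$ merely protects goal facts that already hold. For example, let $\pre(\hla)=\{p\}$, with outcome~1 adding $g$ and outcome~2 deleting $p$. The rule with state condition $\{p\}$ fires, and after outcome~2 the state is covered, if at all, only by rules from some other, possibly longer, demonstration. So the minimal applicable priority need not decrease under every outcome, nor in expectation.

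What you need instead is a strong-cyclic argument with two parts. (i) Closure: every state reached under any outcome of a selected action has an applicable rule and has lost no achieved goal fact. (ii) Some positive-probability outcome lowers your lexicographic measure. Part~(ii) is immediate from per-outcome regression, and together with (i), finiteness of the state space gives success with probability~1.

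But (i) is the crux, and it does not follow from \Cref{def:gi} as you invoke it. Clause~(b) and the non-deletion clause constrain the policies $\hlp_i$ of the GI construction for the given test problem. The learned policy instead executes the lowest-priority lifted rule, which may target another goal fact or come from another class. That rule was extracted from a solution chosen for a singleton-goal representative that does not record the test problem's other goal facts. So a transferred action can delete one of them and increase the first component of your measure. These properties must be secured by how $\dataset$ is chosen and argued separately. The paper's proof simply asserts the same conclusion (``reduces the value of the next applicable rule''), so this is exactly where both versions need completing.
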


\begin{proof}
    Let $N = \max_{a \in \schemata} \abs{\var(a)}$ and $M = \max_{p \in \predicates} \arity(p)$ denote the maximum schema and predicate arities, respectively.

    Up to $\sim$, the set of singleton-goal HL problems on $\domain$ is finite.
    A singleton goal is a single fact $p(o_1, \ldots, o_{n_p})$ which, modulo bijective renaming of objects, is determined by a choice of $p \in \predicates$ together with an arrangement of its at most $M$ argument positions among at most $M$ distinct objects.
    There are therefore at most $\abs{\predicates} \cdot M^M$ singleton goals up to $\sim$.
    By the $\text{GI}_C$ assumption, every singleton-goal subproblem admits a solution policy with size $C$ and hence cycle-less trajectories with size less than $C$.
    A length-$k$ HL action sequence together with its singleton goal mentions at most $kN + M$ distinct objects, since each of the $k$ actions has at most $N$ parameter positions and the goal contributes at most $M$ further positions.
    By \Cref{prop:equiv} it suffices to count action sequences up to $\sim$:
    each of the $k$ action positions contributes at most $\abs{\schemata}$ schema choices and at most $\lr{kN + M}^N$ parameter-tuple choices.
    Hence the number of inequivalent rules extracted by regressing singleton-goal subproblems through length-at-most-$C$ action sequences is bounded by
    \begin{align}
        n \;\le\; \abs{\predicates} \cdot M^M \cdot \sum_{k=0}^{C} \biglr{\abs{\schemata} \cdot \lr{kN + M}^N}^k.
        \label{eq:finite-cover}
    \end{align}
    Choose $\dataset$ to contain, for each equivalence class of singleton-goal subproblem encountered as a step (a) instance in \Cref{def:gi}, at least one demonstration whose HL projection realises a solution policy of length at most $C$ for a representative of that class.
    Such a $\dataset$ is finite by \Cref{eq:finite-cover}.

    Now fix any HL problem $\hlprob$ satisfying the theorem assumptions and let $\hlg_1, \ldots, \hlg_n$ be any goal ordering.
    Let $\hlp$ denote the HL policy synthesised from $\dataset$ by \Cref{alg:hl-policy-learning}.
    By construction, $\hlp$ is the union of lifted rules obtained by goal regression of the demonstrations in $\dataset$.
    By Step~1 and \Cref{prop:equiv}, for every HL state $\hls$ reached during execution and every singleton subgoal $\hlg_i$ not yet achieved in $\hls$,
    there exists a grounded rule $r \in \ground(\hlp)$ that is applicable in state $\hls$ with goal $\hlg_i$.
    Executing $\aHead(r)$ thus prescribes a valid HL action that progresses $\hlprob_{\hls, \seta{\hlg_i}}$ and reduces the value of the next applicable rule in the HL policy.
    Iterating the construction over the $n$ goal facts yields a solution for $\hlprob$ by definition of GI.
\end{proof}

The bound \Cref{eq:finite-cover} is exponential in the horizon $C$ and the maximum schema arity $N$, mirroring the worst-case complexity of generalised planning over relational domains.
In practice, the number of demonstrations required is far smaller, since most singleton-goal subproblems share regression-extracted rules that lift to many equivalence classes simultaneously, and our experiments in \Cref{sec:experiments} confirm that a modest training set already suffices for $\hlp$ to generalise to arbitrary numbers of objects.

% !TEX root =mainfile.tex

\section{Additional Experimental Details}
\subsection{GNN Hyperparameters}\label{app:hyperparams}
We train all GNN models using the Adam optimiser~\citep{kingma.ba.2015} for 200 iterations with an initial learning rate of $10^{-3}$, batch size of 128, and a cosine annealing learning rate scheduler~\citep{loshchilov.hutter.2017}.
Each GNN has a hidden and output dimension of 64 and 2 message passing layers.

\subsection{Baselines}\label{app:methods}
We provide more details on the baselines of our experiments.

\paragraph{\vla{}}
\vla{} is the 450 million parameter open-source SmolVLA \citep{shukor.etal.2025}, accessed from \url{https://huggingface.co/lerobot/smolvla_base}

\paragraph{\vlamw{}}
\vlamw{} is \vla{} that is finetuned on 2500 episodes of the original MetaWorld benchmark suite (50 episodes from each of the 50 tasks), accessed from \url{https://huggingface.co/jadechoghari/smolvla_metaworld}

\paragraph{\purenn{}}
\purenn{} is a special case of \pddlnn{} in which all HL information is not encoded by assuming that $\hls = \hlg = \emptyset$ and by not encoding an HL action node.

\paragraph{\pddlnn{}}
\pddlnn{} is a similar GNN architecture to the one in \Cref{ssec:ll-policy} which encodes all HL state information including all facts and all other objects but no HL action.
We can view \pddlnn{} as a purely end-to-end approach for handling bilevel planning that aims to learn the HL policy in place of using a separately learned, symbolic HL policy.
We sketch the heterogeneous GNN architecture for \pddlnn{} as follows.

The inputs are $\lls, \hlg$ and $\hls = \labelling(\lls)$ and encoded as a graph with a global node with initial embedding $\h_{\globalNode}$, no action node, object nodes with embeddings $\seta{\h_{o} \mid o \in \objects}$ and fact nodes with embeddings $\seta{\h_{f} \mid f \in \hls \cup \hlg, \arity(f) \geq 2}$.
The initial embeddings are now given by
\begin{itemize}
    \item $\h_{\globalNode} = \x_e \concat \sum_{p() \in \hls} \onehot{p}{\abs{\predicates}} \concat \sum_{p() \in \hlg} \onehot{p}{\abs{\predicates}}$
    \item $\h_{o} = \x_{o} \concat \sum_{p(o) \in \hls} \onehot{p}{\abs{\predicates}} \concat \sum_{p(o) \in \hlg} \onehot{p}{\abs{\predicates}}$ and
    \item $\h_{f} = [f \in \hls, f \in \hlg] \concat \onehot{p}{\abs{\predicates}}$ where $p$ is the predicate of $f$.
\end{itemize}

The GNN algorithm for \pddlnn{} then consists of the following three operations.
All $\W$ symbols denote learnable weights.
\begin{enumerate}[label=(\alph*)]
    \item The initial encodings are {embedded} $\h_g^{(0)} = \W_g^{(0)} \h_g$, 
    $\h_{o}^{(0)} = \W_o^{(0)} \h_{o}$, and
    $\h_{f}^{(0)} = \W_f^{(0)} \h_{f}$.
    \item Next, $L$ rounds of {heterogeneous message passing\footnote{\url{https://pytorch-geometric.readthedocs.io/en/latest/tutorial/heterogeneous.html}}} are performed.
    Let $\gena{U, e, V}$ denote a directed edge type from node type $U$ to node type $V$ via relation $e$.
    We overload the notations $U$ and $\gen{U, e, V}$ to also mean a set that contains the nodes of node type $U$, and pairs of nodes $\gena{u, v}$ which have an edge of type $\gena{U, e, V}$, respectively.
    For each fact node $f = p(o_1, \ldots, o_m) \in \hls \cup \hlg$ with $m \geq 2$, we include bidirectional edge types $\gena{F, \mathrm{fct}_j, O}$ and $\gena{O, \mathrm{fct}_j^{-1}, F}$ for each $j = 1, \ldots, m$.
    Node embeddings are updated per edge type by aggregating neighbour messages and the global node to give the updated embedding 
    \begin{align*}
        \h_v^{(l+1)} = \sigma\!\left(\W_v \h_v^{(l)} + \W_{\globalNode}\h_{\globalNode}^{(l)} + \sum_{\gena{U,e,V}} \max_{\gena{u, v} \in \gena{U,e,V}} \W_{\gen{U,e,V}}^{(l)} \h_{u}^{(l)}\right)
    \end{align*}
    for nonlinearity $\sigma$.
    We then update the global node
    \begin{align*}
    \h_{\globalNode}^{(l+1)} = \sigma\!\left(
        \sum_{U} \max_{u \in U} \W_{U} \h_{u}^{(l+1)}
    \right).
    \end{align*}
    \item Finally, a feed-forward {readout} layer is applied to the sum of all final embeddings to predict the LL action $\lla$.
\end{enumerate}

\paragraph{\detplan{}}
\Cref{alg:detplan} illustrates the \detplan{} baseline in more detail.
It firstly computes an HL plan $a_1, \ldots, a_n$ and aims to naively and faithfully follow it throughout the simulation.
If the precondition of the current HL action, which is tracked by the index $i$, is satisfied in the current HL state, then we keep it.
Otherwise we check if the next action exists and its precondition is satisfied.
Otherwise, we determine that the HL plan is broken and terminate with failure as no HL action can be computed.
% !TEX root =mainfile.tex

\begin{algorithm}[ht]
\algorithmSize
  \DontPrintSemicolon
  \LinesNumbered
  \RestyleAlgo{ruled}
  \SetKwFor{While}{while}{}{}
    \caption{\detplan{} Baseline}\label{alg:detplan}
    \KwInput{Problem $\problem$ domain $\domain$, labelling function $\labelling$, HL goal $\hlg$, and LL policy $\llp$.}
    $\lls \gets \init$ \;
    $\hls \gets \labelling(\lls)$ \; 
    $a_1, \ldots, a_n \la \findPlan(\hls, \hlg)$ \; \label{line:detplan:plan}
    $i \la 1$ \;
  \While{$\hls \notin \hlg$ and $i \leq n$}{
    $\hls \gets \labelling(\lls)$ \;
    \If{$\pre(a_i) \subseteq \hls$}{\label{line:detplan:good}
        $\hla \gets a_i$ \; 
    }
    \ElseIf{$\pre(a_i) \not\subseteq \hls$ and $i + 1 \leq n$ and $\pre(a_{i+1})\subseteq \hls$}{\label{line:detplan:next}
        $i \gets i + 1$ \; 
        $\hla \gets a_i$ \; 
    }
    \Else{\label{line:detplan:fail}
        \Return{Failure}\; 
    }

    $\lla \gets \llp(\lls, \hla, \hlg)$ \; 
    $\lls \gets \exec(\lls, \lla)$ \; 
  }
\end{algorithm}

\paragraph{\detreplan{}}
\Cref{alg:detreplan} illustrates the \detreplan{} baseline in more detail.
It is the same as \Cref{alg:detplan} except in how it handles the failure case by replanning.
It computes a new HL plan and resets the index $i$ and length of the plan $n$.
% !TEX root =mainfile.tex

\begin{algorithm}[ht]
\algorithmSize
  \DontPrintSemicolon
  \LinesNumbered
  \RestyleAlgo{ruled}
  \SetKwFor{While}{while}{}{}
    \caption{\detreplan{} Baseline}\label{alg:detreplan}
    \KwInput{Problem $\problem$ domain $\domain$, labelling function $\labelling$, HL goal $\hlg$, and LL policy $\llp$.}
    $\lls \gets \init$ \;
    $\hls \gets \labelling(\lls)$ \; 
    $a_1, \ldots, a_n \la \findPlan(\hls, \hlg)$ \; \label{line:detreplan:plan}
    $i \la 1$ \;
  \While{$\hls \notin \hlg$ and $i \leq n$}{
    $\hls \gets \labelling(\lls)$ \;
    \If{$\pre(a_i) \subseteq \hls$}{\label{line:detreplan:good}
        $\hla \gets a_i$ \; 
    }
    \ElseIf{$\pre(a_i) \not\subseteq \hls$ and $i + 1 \leq n$ and $\pre(a_{i+1})\subseteq \hls$}{\label{line:detreplan:next}
        $i \gets i + 1$ \; 
        $\hla \gets a_i$ \; 
    }
    \Else{\label{line:detreplan:fail}
        $a_1, \ldots, a_n \la \findPlan(\hls, \hlg)$ \;
        $i \la 1$ \;
        \textbf{continue} \;
    }

    $\lla \gets \llp(\lls, \hla, \hlg)$ \; 
    $\lls \gets \exec(\lls, \lla)$ \; 
  }
\end{algorithm}

\paragraph{\regplan{}}
% !TEX root =mainfile.tex

\begin{algorithm}[ht]
\algorithmSize
  \DontPrintSemicolon
  \LinesNumbered
  \RestyleAlgo{ruled}
  \SetKwFor{While}{while}{}{}
    \caption{\ndtplan{} Baseline}\label{alg:ndtplan}
    \KwInput{Problem $\problem$ domain $\domain$, labelling function $\labelling$, HL goal $\hlg$, and LL policy $\llp$.}
    $\lls \gets \init$ \;
    $\hls \gets \labelling(\lls)$ \; 
    $\hlp \la \findPolicy(\hls, \hlg)$ \; \label{line:ndtplan:policy}
  \While{$\hls \notin \hlg$}{
    $\hls \gets \labelling(\lls)$ \;
    $\hla \gets \hlp(\hls, \hlg)$ \;
    \If{$\hla = \bot$}{\label{line:ndtplan:bad}
        \Return{Failure}\; 
    }

    $\lla \gets \llp(\lls, \hla, \hlg)$ \; 
    $\lls \gets \exec(\lls, \lla)$ \; 
  }
\end{algorithm}

\Cref{alg:ndtplan} illustrates the \ndtplan{} baseline in more detail.
It firstly aims to compute an HL policy specific to the problem and tries to execute it.
The major difference is that the HL policy may return nothing for a given state, in which case we terminate with failure.

\paragraph{\regreplan{}}
\Cref{alg:ndtreplan} illustrates the \ndtreplan{} baseline in more detail.
It is the same as \Cref{alg:ndtplan} except it now aims to replan by recomputing an HL policy for the failure case.
% !TEX root =mainfile.tex

\begin{algorithm}[ht]
\algorithmSize
  \DontPrintSemicolon
  \LinesNumbered
  \RestyleAlgo{ruled}
  \SetKwFor{While}{while}{}{}
    \caption{\ndtreplan{} Baseline}\label{alg:ndtreplan}
    \KwInput{Problem $\problem$ domain $\domain$, labelling function $\labelling$, HL goal $\hlg$, and LL policy $\llp$.}
    $\lls \gets \init$ \;
    $\hls \gets \labelling(\lls)$ \; 
    $\hlp \la \findPolicy(\hls, \hlg)$ \; \label{line:ndtreplan:policy}
  \While{$\hls \notin \hlg$}{
    $\hls \gets \labelling(\lls)$ \;
    $\hla \gets \hlp(\hls, \hlg)$ \;
    \If{$\hla = \bot$}{\label{line:ndtreplan:bad}
        $\hlp \la \findPolicy(\hls, \hlg)$ \;
        \textbf{continue}\;
    }

    $\lla \gets \llp(\lls, \hla, \hlg)$ \; 
    $\lls \gets \exec(\lls, \lla)$ \; 
  }
\end{algorithm}

\subsection{Benchmarks}\label{app:benchmarks}
The benchmarks compositionally extend the MetaWorld \citep{yu.etal.2019,mclean.etal.2025} benchmarks which consist of 3D MuJoCo simulations of a robot arm.
The following are descriptions of the environment tasks and also serve as the natural language inputs to the VLA baseline.
The environments are illustrated in \Cref{fig:envs}.

\begin{figure}
    \newcommand{\wwww}{0.24\textwidth}
    \centering
    \begin{subfigure}[b]{\wwww}
        \includegraphics[width=\textwidth]{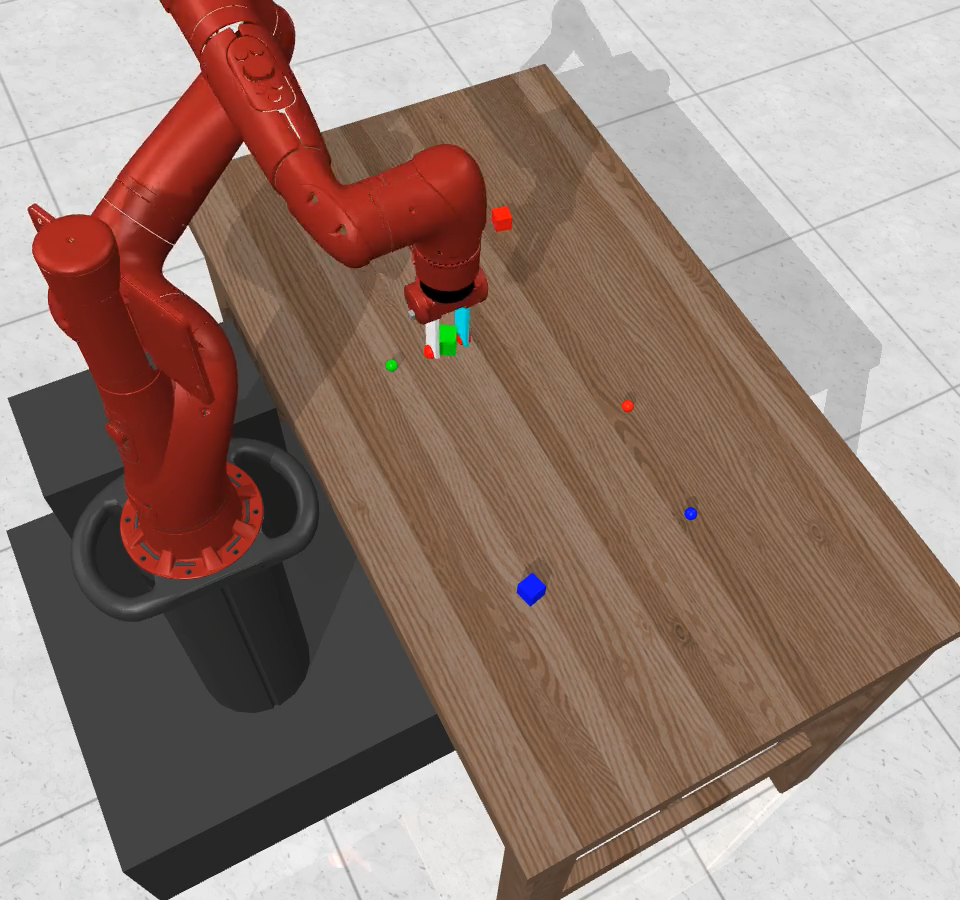}
        \subcaption{Blocks}
    \end{subfigure}
    \begin{subfigure}[b]{\wwww}
        \includegraphics[width=\textwidth]{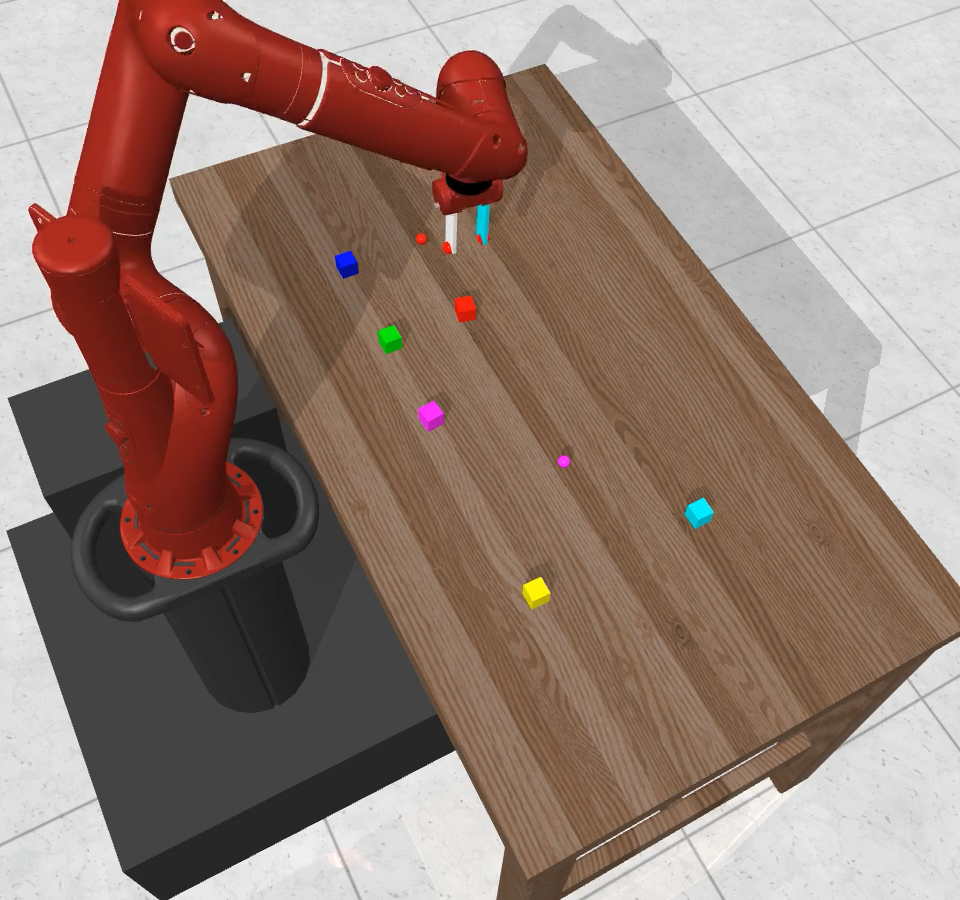}
        \subcaption{Factory}
    \end{subfigure}
    \begin{subfigure}[b]{\wwww}
        \includegraphics[width=\textwidth]{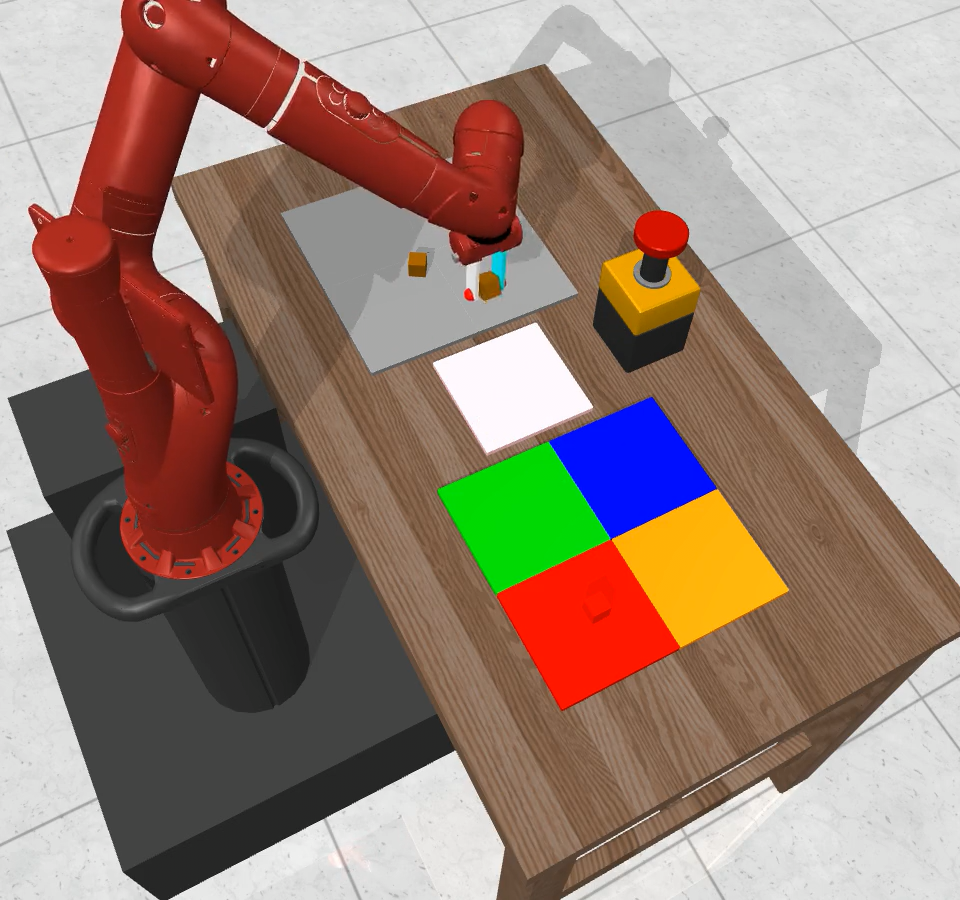}
        \subcaption{Colour}
    \end{subfigure}
    \begin{subfigure}[b]{\wwww}
        \includegraphics[width=\textwidth]{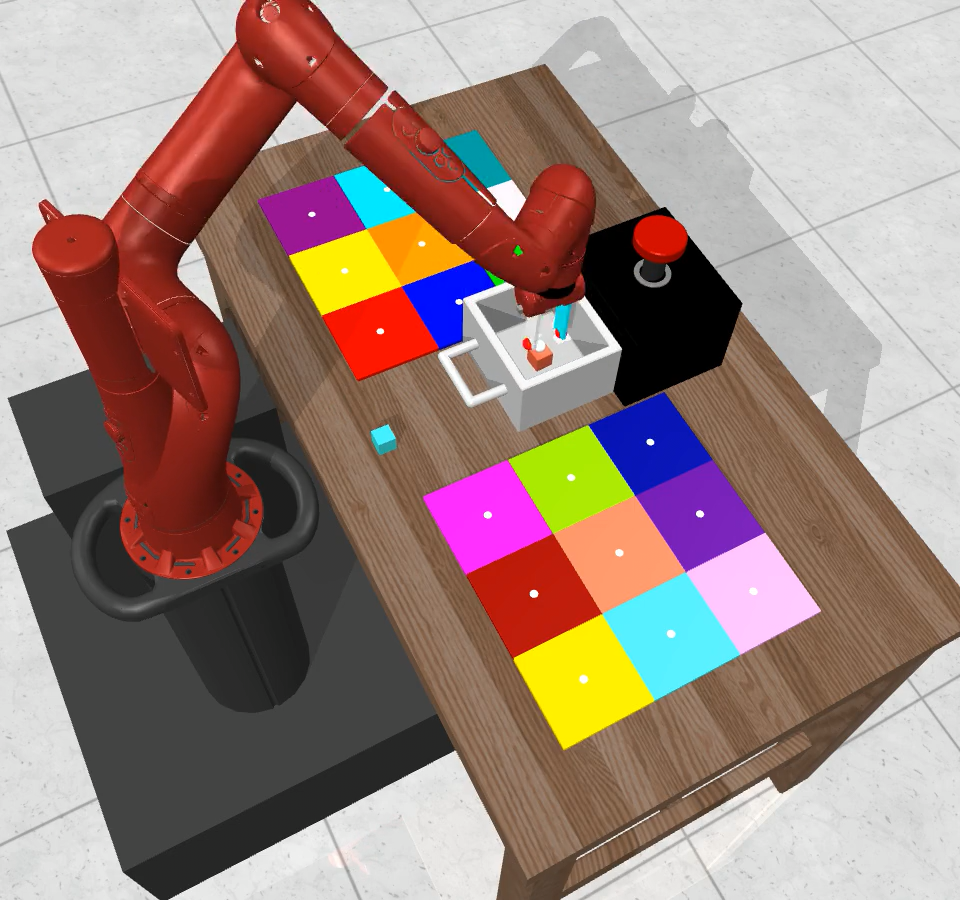}
        \subcaption{Gacha}
    \end{subfigure}
    \caption{Visualisations of benchmark environments.}\label{fig:envs}
\end{figure}

\paragraph{\blocks{}}
Place the $n$ blocks to goal locations of the same colour.

\paragraph{\blocksNoisy{}}
Place the $n$ blocks to goal locations of the same colour. Blocks that are currently at their goal location may teleport to an unoccupied location at any time.

\paragraph{\factory{}}
Place the $n$ blocks to goal locations of the same colour. Every time each of the original blocks is placed, a new block is introduced with a new goal location.

\paragraph{\factoryNoisy{}}
Place the $n$ blocks to goal locations of the same colour. Every time each of the original blocks is placed, a new block is introduced with a new goal location. Blocks that are currently at their goal location may also teleport to an unoccupied location at any time.

\paragraph{\colour{}}
Place the $n$ brown blocks onto the pink platform, which assigns blocks a random colour when activated. The goal is to colour each of the brown blocks and then place them on a tray with the same colour.

\paragraph{\colourNoisy{}}
Place the $n$ brown blocks onto the pink platform, which assigns blocks a random colour when activated. The goal is to colour each of the brown blocks and then place them on a tray with the same colour. Blocks that are currently at their goal location may also teleport to an unoccupied location at any time.

\paragraph{\gacha{}}
This environment consists of $n$ target colours, a gallery of trays with different colours, and a \textit{gacha} box which when its lid is closed and activated produces a block of a random colour.
The goal is to match each target colour by placing a block of that colour on its corresponding tray.

\paragraph{\gachaNoisy{}}
This environment consists of $n$ target colours, a gallery of trays with different colours, and a \textit{gacha} box which when its lid is closed and activated produces a block of a random colour.
The goal is to match each target colour by placing a block of that colour on its corresponding tray.
Blocks that are currently at their goal location may teleport back into the gacha box whenever it is unoccupied.

\clearpage
\subsection{Visualising Learned HL Policies}\label{app:interpret}
We display the learned HL policies for each of the environments which can generalise to arbitrary numbers of objects.
We refer to \Cref{ex:pick-place-policy} for interpreting the form of the rules.
We automatically generate LLM interpretations of the policies via Claude Sonnet 4.6.

\newcommand{\policySize}{\footnotesize}

\paragraph{\blocks{} and \blocksNoisy{}}
{
\policySize
\begin{align*}
    1: \exists x, l. \; \mi{holding}(x) \ruleAnd \mi{clear}(l)
    \ruleAnd \underline{\mi{at}(x, l)} 
    &\mapsto \mi{place}(x, l) \\
    2: \exists x, l. \; \mi{clear}(x) \ruleAnd \mi{gripperFree}()
    \ruleAnd \underline{\mi{at}(x, l)} 
    &\mapsto \mi{pick}(x)
\end{align*}
}

\underline{LLM-interpretation}:
The policy has two rules, applied by priority.
Rule 1 says: if the gripper is holding block $x$ and its goal location $l$ is clear, place $x$ at $l$.
Rule 2 says: if block $x$ is clear (nothing on top), the gripper is free, and $x$ has a goal location, pick up $x$.
Together these encode a simple pick-and-place loop: pick the next block with an unmet goal, then immediately place it.
The policy also handles \blocksNoisy{} reactively, since the goal predicate $\mi{at}^{\G}(x,l)$ remains true for any block teleported away from its goal, so it gets re-picked on the next step.

\paragraph{\factory{} and \factoryNoisy{}}
{
\policySize
\begin{align*}
    1: \exists x, l. \; \mi{holding}(x) \ruleAnd \mi{clear}(l)
    \ruleAnd \underline{\mi{at}(x, l)} 
    &\mapsto \mi{place}(x, l) \\
    2: \exists x, l. \; \mi{clear}(x) \ruleAnd \mi{gripperFree}()
    \ruleAnd \underline{\mi{at}(x, l)} 
    &\mapsto \mi{pick}(x)
\end{align*}
}

\underline{LLM-interpretation}:
The rules are identical to those of \blocks{}.
This is notable: even though new blocks are introduced dynamically when existing ones are placed, the policy generalises automatically because its rules are existentially quantified over objects.
Any newly introduced block $x$ with goal location $l$ immediately satisfies Rule 2 and gets picked, then placed by Rule 1, with no change to the policy itself.

\paragraph{\colour{} and \colourNoisy{}}
\newcommand{\mapperto}{\mapsto}
{
\policySize
\begin{align*}
    1: \exists c, x, t. \; \mi{colourOf}(x, c) \ruleAnd \mi{colourOf}(t, c) \ruleAnd \mi{holding}(x)
    \ruleAnd \underline{\mi{achievedGoal}(x)} \\ 
    \mapperto \mi{placeGoal}(x, t, c) \\
    2: \exists c, x, t. \; \mi{colourOf}(x, c) \ruleAnd \mi{colourOf}(t, c) \ruleAnd \mi{gripperFree}() \ruleAnd \mi{on}(x, t)
    \ruleAnd \underline{\mi{achievedGoal}(x)} \\ 
    \mapperto \mi{pick}(x) \\
    2: \exists c, x, t. \; \mi{colourOf}(x, c) \ruleAnd \mi{colourOf}(t, c) \ruleAnd \mi{gripperFree}()
    \ruleAnd \underline{\mi{achievedGoal}(x)} \\ 
    \mapperto \mi{pick}(x) \\
    3: \exists b, c, x, t, t'. \; \mi{colourOf}(t, c) \ruleAnd \mi{colourer}(t') \ruleAnd \mi{gripperFree}() \ruleAnd \mi{on}(x, t') \ruleAnd \mi{unknown}(x)
    \ruleAnd \underline{\mi{achievedGoal}(x)} \\ 
    \mapperto \mi{open}(x, b, t') \\
    4: \exists c, x, t, t'. \; \mi{colourOf}(t, c) \ruleAnd \mi{colourerFree}() \ruleAnd \mi{colourer}(t') \ruleAnd \mi{holding}(x) \ruleAnd \mi{unknown}(x)
    \ruleAnd \underline{\mi{achievedGoal}(x)} \\ 
    \mapperto \mi{place}(x, t') \\
    5: \exists c, x, t, t'. \; \mi{colourOf}(t, c) \ruleAnd \mi{colourer}(t') \ruleAnd \mi{gripperFree}() \ruleAnd \mi{on}(x, t') \ruleAnd \mi{unknown}(x)
    \ruleAnd \underline{\mi{achievedGoal}(x)} \\ 
    \mapperto \mi{pick}(x) \\
    5: \exists c, x, t, t'. \; \mi{colourOf}(t, c) \ruleAnd \mi{colourerFree}() \ruleAnd \mi{colourer}(t') \ruleAnd \mi{gripperFree}() \ruleAnd \mi{unknown}(x)
    \ruleAnd \underline{\mi{achievedGoal}(x)} \\ 
    \mapperto \mi{pick}(x)
\end{align*}
}

\underline{LLM-interpretation}:
The task requires colouring brown (unknown) blocks on a pink platform before placing them on matching trays, and the policy decomposes the workflow into five priority levels.
Rule 1: if holding a block $x$ whose colour $c$ matches a tray $t$ of the same colour, place $x$ on $t$.
Rule 2 (two variants): if a coloured block $x$ already matches a tray colour, pick it up; the two variants differ only in whether $x$ is currently resting on a surface.
Rule 3: if an unknown block $x$ is sitting on the colourer $t'$, activate the colourer (\mi{open}) to assign $x$ a colour.
Rule 4: if the gripper is holding an unknown block and the colourer is free, place the block on the colourer.
Rule 5 (two variants): if a block has no colour yet, pick it up so it can be brought to the colourer.
The priority ordering encodes the full pipeline: fetch unknown block, place it on the colourer, activate the colourer, then pick up the now-coloured block and place it on the matching tray.
The same policy handles \colourNoisy{} reactively, as in \blocks{}.

\clearpage

\paragraph{\gacha{} and \gachaNoisy{}}
{
\tiny
\begin{align*}
    1: \exists c, x, t. \; \mi{colourOf}(x, c) \ruleAnd \mi{holding}(x) \ruleAnd \mi{trayColour}(t, c)
    \ruleAnd \underline{\mi{achievedGoal}(c)} \\ 
    \mapperto \mi{placeGoal}(x, t, c) \\
    2: \exists c, x, d, t. \; \mi{colourOf}(x, c) \ruleAnd \mi{gripperFree}() \ruleAnd \mi{in}(x, d) \ruleAnd \mi{opened}(d) \ruleAnd \mi{trayColour}(t, c)
    \ruleAnd \underline{\mi{achievedGoal}(c)} \\ 
    \mapperto \mi{pick}(x) \\
    3: \exists c, x, d, t. \; \mi{colourOf}(x, c) \ruleAnd \mi{gripperFree}() \ruleAnd \mi{in}(x, d) \ruleAnd \mi{trayColour}(t, c)
    \ruleAnd \underline{\mi{achievedGoal}(c)} \\ 
    \mapperto \mi{open}(d) \\
    4: \exists b, c, d, t. \; \mi{clear}(d) \ruleAnd \mi{closed}(d) \ruleAnd \mi{gripperFree}() \ruleAnd \mi{trayColour}(t, c)
    \ruleAnd \underline{\mi{achievedGoal}(c)} \\ 
    \mapperto \mi{roll}(d, b) \\
    5: \exists c, d, t. \; \mi{clear}(d) \ruleAnd \mi{gripperFree}() \ruleAnd \mi{opened}(d) \ruleAnd \mi{trayColour}(t, c)
    \ruleAnd \underline{\mi{achievedGoal}(c)} \\ 
    \mapperto \mi{close}(d) \\
    6: \exists c, c', x, d, t, t'. \; \mi{clear}(d) \ruleAnd \mi{colourOf}(x, c) \ruleAnd \mi{holding}(x) \ruleAnd \mi{opened}(d) \ruleAnd \mi{trayColour}(t, c) \ruleAnd \mi{trayColour}(t', c')
    \ruleAnd \underline{\mi{achievedGoal}(c')} \\
    \mapperto \mi{placeGoal}(x, t, c) \\
    7: \exists c, c', x, d, t, t'. \; \mi{colourOf}(x, c) \ruleAnd \mi{gripperFree}() \ruleAnd \mi{in}(x, d) \ruleAnd \mi{opened}(d) \ruleAnd \mi{trayColour}(t, c) \ruleAnd \mi{trayColour}(t', c')
    \ruleAnd \underline{\mi{achievedGoal}(c')} \\
    \mapperto \mi{pick}(x) \\
    7: \exists c, c', x, d, t, t'. \; \mi{colourOf}(x, c) \ruleAnd \mi{gripperFree}() \ruleAnd \mi{in}(x, d) \ruleAnd \mi{opened}(d) \ruleAnd \mi{trayColour}(t, c') \ruleAnd \mi{trayColour}(t', c)
    \ruleAnd \underline{\mi{achievedGoal}(c')} \\
    \mapperto \mi{pick}(x) \\
    8: \exists c, c', x, d, t, t'. \; \mi{colourOf}(x, c) \ruleAnd \mi{gripperFree}() \ruleAnd \mi{in}(x, d) \ruleAnd \mi{trayColour}(t, c') \ruleAnd \mi{trayColour}(t', c)
    \ruleAnd \underline{\mi{achievedGoal}(c')} \\
    \mapperto \mi{open}(d) \\
    8: \exists c, c', x, d, t, t'. \; \mi{colourOf}(x, c) \ruleAnd \mi{gripperFree}() \ruleAnd \mi{in}(x, d) \ruleAnd \mi{trayColour}(t, c) \ruleAnd \mi{trayColour}(t', c')
    \ruleAnd \underline{\mi{achievedGoal}(c')} \\
    \mapperto \mi{open}(d) \\
    9: \exists b, c, c', d, t, t'. \; \mi{clear}(d) \ruleAnd \mi{closed}(d) \ruleAnd \mi{gripperFree}() \ruleAnd \mi{trayColour}(t, c) \ruleAnd \mi{trayColour}(t', c')
    \ruleAnd \underline{\mi{achievedGoal}(c)} \\ 
    \mapperto \mi{roll}(d, b) \\
    9: \exists b, c, c', d, t, t'. \; \mi{clear}(d) \ruleAnd \mi{closed}(d) \ruleAnd \mi{gripperFree}() \ruleAnd \mi{trayColour}(t, c) \ruleAnd \mi{trayColour}(t', c')
    \ruleAnd \underline{\mi{achievedGoal}(c')} \\
    \mapperto \mi{roll}(d, b)
\end{align*}
}

\underline{LLM-interpretation}:
The task requires producing blocks of specific colours from a gacha box (close lid, roll, open lid, then extract block) and placing them on matching trays.
The policy has nine priority levels.
Rule 1: if holding a block $x$ of the needed colour $c$, place it on the matching tray $t$.
Rule 2: if the gacha $d$ is open and contains a block of the needed colour, pick it.
Rule 3: if the gacha contains a block of the needed colour but is closed, open it.
Rule 4: if the gacha is closed and empty, roll it to produce a new block of random colour.
Rule 5: if the gacha is open but does not contain a useful block, close it so that it can be rolled again.
Rules 6--8 cover \emph{opportunistic} actions when the currently pursued goal colour is $c'$ but a block of a different needed colour $c$ becomes available: place it on its matching tray (Rule 6), pick it from an open gacha (Rule 7, two variants differing in the binding of $c$ and $c'$ to the trays), or open the gacha to retrieve it (Rule 8, two variants).
Rule 9 (two variants) rolls the gacha when at least two unachieved goal colours remain and the gacha is empty.
The policy is more complex than \blocks{} because it must reason about both the stochastic production mechanism (rolls produce random colours) and opportunity costs across multiple simultaneous colour goals; Rules 6--9 in particular ensure that lucky rolls are never wasted, even when they do not match the colour the agent is currently chasing.

%%%%%%%%%%%%%%%%%%%%%%%%%%%%%%%%%%%%%%%%%%%%%%%%%%%%%%%%%%%%

\end{document}